\pgfplotsset{compat=newest}
\newtheorem{remark}{Remark}\setcounter{remark}{0}
\newtheorem{corollary}
{Corollary}\setcounter{corollary}{0}
\newtheorem{definition}{Definition}\setcounter{definition}{0}
\pgfplotsset{compat=newest}
\newcommand{\sd}[1]{\scriptsize{\,$\pm$\,#1}}
\title{Beyond Fixed Depth: Adaptive Graph Neural Networks for Node Classification Under Varying Homophily}
\author{
    Asela Hevapathige\textsuperscript{\rm 1},
    Asiri Wijesinghe\textsuperscript{\rm 2},
    Ahad N. Zehmakan\textsuperscript{\rm 1}
}
\begin{document}

\maketitle

\begin{abstract}

Graph Neural Networks (GNNs) have achieved significant success in addressing node classification tasks. However, the effectiveness of traditional GNNs degrades on heterophilic graphs, where connected nodes often belong to different labels or properties. While recent work has introduced mechanisms to improve GNN performance under heterophily, certain key limitations still exist. Most existing models apply a fixed aggregation depth across all nodes, overlooking the fact that nodes may require different propagation depths based on their local homophily levels and neighborhood structures. Moreover, many methods are tailored to either homophilic or heterophilic settings, lacking the flexibility to generalize across both regimes. To address these challenges, we develop a theoretical framework that links local structural and label characteristics to information propagation dynamics at the node level. Our analysis shows that optimal aggregation depth varies across nodes and is critical for preserving class-discriminative information. Guided by this insight, we propose a novel adaptive-depth GNN architecture that dynamically selects node-specific aggregation depths using theoretically grounded metrics. Our method seamlessly adapts to both homophilic and heterophilic patterns within a unified model. Extensive experiments demonstrate that our approach consistently enhances the performance of standard GNN backbones across diverse benchmarks. 
\end{abstract}

\section{Introduction}

Graph Neural Networks (GNNs) have achieved remarkable success in various graph-based downstream tasks across diverse domains, including social network analysis \cite{li2023survey,benslimane2023text}, bioinformatics \cite{zhang2021graph,yi2022graph}, and anomaly detection \cite{kim2022graph,bei2025guarding}. Particularly, GNNs have demonstrated strong performance in node classification tasks due to their ability to integrate structural characteristics and node feature information into meaningful representations \cite{sun2024breaking,khoshraftar2024survey}.

However, many mainstream GNNs are built upon the homophily assumption, where connected nodes tend to share similar labels or characteristics \cite{mcpherson2001birds,zhu2020beyond}. While this assumption has proven effective for many real-world networks, such as citation networks and knowledge graphs \cite{yang2016revisiting}, it significantly limits GNNs' performance on heterophilic graphs, where connected nodes often have different labels or properties \cite{zheng2023auto}. Such heterophilic patterns are common in many real-world scenarios, including social networks and web graphs \cite{pei2020geom,zhu2021graph,platonovcritical}.

To address this limitation, researchers have begun exploring specialized heterophily-aware GNN architectures that explicitly account for the dissimilarity between connected nodes and develop alternative message-passing strategies \cite{zhu2021graph,zheng2022graph,luan2022revisiting,zhu2023heterophily,wangunderstanding}. Nonetheless, certain limitations still exist. First, categorizing graphs as homophilic or heterophilic based on simple neighborhood label ratios oversimplifies the complex nature of information propagation in graphs. Real-world graphs are intricate structural entities with different regions or nodes exhibiting varying levels of homophily. Different nodes have varying sensitivity to information aggregation, depending on their structural context, features, and neighbor label composition. A more nuanced understanding of this phenomenon is required to design GNN architectures that accommodate these differences. Second, methods specifically designed for heterophilic graphs often lack the flexibility to effectively handle both homophilic and heterophilic contexts within the same architecture, limiting their applicability and generalizability across diverse graph types.

We aim to address these limitations through a fundamental observation: optimal information propagation strategies are inherently
different across nodes within the same graph. The uniform nature of information aggregation in current GNNs (i.e., fixed layer depth applied to all nodes) limits node classification performance, as different nodes have varying information aggregation needs based on their local neighbourhood properties, including degree distribution, label homophily ratio, and feature similarity patterns within their close neighbourhoods. This motivates us to theoretically connect structural and label characteristics to information propagation dynamics at the node level, gaining a more nuanced understanding of node-specific aggregation needs beyond global graph homophily characteristics.

Building on these theoretical insights, we derive node-specific metrics that quantify individual node aggregation requirements without being constrained by global graph homophily measures. We then utilize these insights to develop dynamic GNN architectures that adaptively determine aggregation depth for each node, improving their ability to handle node-specific information propagation needs. The main contributions of our work are summarized as follows:

\begin{itemize}
    \item \textbf{Theoretical Foundation}: We establish theoretical connections between structural and label characteristics and information propagation dynamics, deriving node-specific metrics that quantify optimal aggregation depth requirements in GNNs.
    
    \item \textbf{Novel Adaptive Depth Architecture}: We propose a novel GNN architecture that dynamically adjusts aggregation depth for individual nodes based on their specific information propagation needs, with two distinct variants to accommodate different computational requirements.
    
    \item \textbf{Unified Homophilic/Heterophilic Handling}: Our architecture provides a single framework that seamlessly accommodates both homophilic and heterophilic graph patterns without requiring separate architectures or preprocessing steps.

    \item \textbf{Empirical Validation}: We provide comprehensive experiments demonstrating that our adaptive depth architecture consistently improves node classification performance of mainstream GNN backbones across diverse graph structures and homophily levels.
\end{itemize}
\section{Related Work}

\paragraph{Heterophily-Aware GNNs} While traditional GNNs assume homophily, such an inductive bias is believed to result in performance degradation in less homophilic graphs \cite{lim2021large}. Geom-GCN \cite{pei2020geom} is one of the early works that identified this performance bottleneck. Since then, there has been a plethora of works targeting enhancing GNN performance in heterophilic graphs. Some works propose filtering neighbours to improve aggregation performance on heterophilic graphs by excluding or deprioritising dissimilar nodes \cite{bo2021beyond,luan2022revisiting,zheng2023finding,guo2024gnn}. Furthermore, some works have shown that incorporating higher-order neighbourhoods could alleviate this issue, as it enables the aggregation process to access more homophilic information \cite{zhu2020beyond,wang2021graph,wang2022powerful,li2022finding,yu2024lg,li2024pc}. There have been some works introducing structural constraints such as neighbour ordering \cite{songordered} and selective aggregation mechanisms \cite{he2022block,haruta2023novel,wang2024heterophilic} to enable more discriminative node representations by controlling how information flows between nodes of different classes. While the aforementioned works have focused on improving GNN architecture, some studies have taken an alternative path by rewiring the input graph to increase homophily \cite{guo2023homophily,li2023restructuring,bi2024make,bose2025can}.

Recent theoretical findings have shown that heterophily does not always negatively impact node classification. \citet{mahomophily} first demonstrated this, showing that moderate levels of heterophily are more detrimental to GNNs than conditions of extreme heterophily. Further, \citet{wangunderstanding} elaborated that class separability in node classification using GNNs depends on both neighborhood distribution distances and node degrees, establishing that the impact of heterophily is nuanced rather than uniformly negative. 

Our work is fundamentally different from the above. We present a granular node-level theoretical framework that decomposes GNN aggregation based on neighbourhood label composition, demonstrating how the classification quality of individual nodes scales across multiple layers. Our work shows that nodes with varying neighborhood compositions benefit from different propagation depths, resulting in an adaptive architecture.

\paragraph{Depth-Adaptive GNNs} There have been few works exploring the impact of adaptive depth in GNNs. \citet{wu2024depth} employed reinforcement learning to design a flexible GNN architecture that adaptively searches for optimal parameters of components, including GNN depth, aggregation functions, and pooling operations for graph classification. ADMP-GNN \cite{abbahaddouadmp} introduced a depth allocation policy for GNNs using centrality heuristics to cluster structurally similar nodes and assign optimal layer depth based on their validation performance. Recently, \citet{hevapathige2025depth} integrated learnable Bakry-Émery curvature to determine node-specific aggregation depth, aiming to enhance feature distinctiveness.

In contrast to these works, our method is specifically designed to address heterophily challenges in node classification by analysing how neighbourhood label composition affects the propagation depth.

\section{Theoretical Analysis}

We develop a theoretical framework to understand how neighborhood profile impacts node classification performance in GNNs under different homophily conditions, establishing the basis for our adaptive depth allocation strategy. For node $v$, we define its profile as the tuple $(d^+_v, d^-_v, d_v)$ describing the distribution of same-label neighbours, opposite-label neighbours, and total degree within its neighbourhood. All proofs for the theorems are in the Appendix.

\paragraph{Preliminaries} We consider an undirected graph \( G = (V, E, \mathbf{X}, \mathbf{y}) \), where \( V \) represents the set of nodes, \( E \) is the set of edges, \( \mathbf{X} \in \mathbb{R}^{|V| \times d} \) is the matrix of initial node feature representations with a feature dimension of \( d \), and \( \mathbf{y} \in \{0,1\}^{|V|} \) is the vector of node labels. Each node \( v \in V \) is associated with a feature vector \( \mathbf{x}_v \in \mathbb{R}^d \) (i.e., \( \mathbf{x}_v = \mathbf{X}_{v,:} \)), and has a label \( y_v \in \{0, 1\} \) (i.e., \( y_v = \mathbf{y}_v \)). For theoretical tractability, we focus on the binary classification setting. Let \( \mathcal{N}(v) \) be the neighborhood of node \( v \), and \( d_v = |\mathcal{N}(v)| \) its degree. We define same-label neighbors as \( \mathcal{N}^+_v = \{u \in \mathcal{N}(v) : y_u = y_v\} \) and opposite-label neighbors as \( \mathcal{N}^-_v = \{u \in \mathcal{N}(v) : y_u \neq y_v\} \). Their cardinalities are \( d^+_v = |\mathcal{N}^+_v| \) and \( d^-_v = |\mathcal{N}^-_v| \), with \( d_v = d^+_v + d^-_v \).

\paragraph{Class Concepts} To establish the assumptions for our analysis, we formally define the concepts of class prototypes, signal variance, and noise variance.

\begin{definition} [Class Concepts]
We define the following graph-wide parameters:
\begin{itemize}
\item Class prototypes: $\boldsymbol{\mu}^0, \boldsymbol{\mu}^1 \in \mathbb{R}^d$ where $\boldsymbol{\mu}^c = \mathbb{E}[\mathbf{x}_u | y_u = c]$ for $c \in \{0, 1\}$
\item Signal variance: \( \Delta^2 = \|\boldsymbol{\mu}^0 - \boldsymbol{\mu}^1\|^2 \) 
\item Noise variance:$\sigma^2_{\text{intra}} = \text{Var}[\mathbf{x}_u | y_u = c]$ for any $c \in \{0, 1\}$ 
\end{itemize}
\end{definition}

Signal variance assesses the separation between different classes, while noise variance evaluates the variation within a single class. Note that, under the homoscedasticity assumption \cite{yang2019homoscedasticity}, we assume equal noise variance across all classes. Higher signal variance leads to better class separation, while lower noise variance ensures tight clustering within classes, resulting in better classification quality \cite{fisher1936use}.

\paragraph{GNN Architecture} We utilize Graph Convolutional Network (GCN) \cite{kipf2017semi} to examine homophily/heterophily effects, aligning with numerous theoretical analyses in this area \cite{mahomophily,wang2024understanding}. The fundamental node update mechanism of GCN is:

\begin{equation*}
\mathbf{X}^{(l+1)} = \sigma(\tilde{\mathbf{D}}^{-1/2}\tilde{\mathbf{A}}\tilde{\mathbf{D}}^{-1/2}\mathbf{X}^{(l)}\mathbf{W}^{(l)})
\end{equation*}
 where $\mathbf{W}^{(l)}$ is the learnable parameter matrix at layer $l$, $\tilde{\mathbf{A}} = \mathbf{A} + \mathbf{I}$ is the adjacency matrix with added self-loops, $\tilde{\mathbf{D}}$ is the degree matrix corresponding to $\tilde{\mathbf{A}}$, and $\mathbf{X}^{(0)} = \mathbf{X}$ are the initial node features. For analytical tractability, we analyze a simplified aggregation scheme that captures the essential neighborhood averaging behavior. Following \cite{wu2019simplifying}, which shows that most of the benefit in GCNs comes from local averaging rather than from nonlinear activation functions, we focus on the uniform averaging operation obtained through row normalization $\tilde{\mathbf{D}}^{-1}\tilde{\mathbf{A}}$.

\begin{equation*}
\mathbf{h}_v = \frac{1}{d_v + 1}\left(\mathbf{x}_v + \sum_{u \in \mathcal{N}(v)} \mathbf{x}_u\right)
\end{equation*}

\paragraph{Assumptions} We consider a contextual stochastic block model (CSBM) \citep{deshpande2018contextual} where nodes are partitioned into two classes. Each node's feature follows a class-specific Gaussian distribution: $\mathbf{x}_v = \boldsymbol{\mu}^{y_v} + \boldsymbol{\epsilon}_v$ with $\boldsymbol{\epsilon}_v \sim \mathcal{N}(\mathbf{0}, \sigma^2_{\text{intra}} \mathbf{I}_d)$, where $\mathbf{I}_d$ is the $d \times d$ identity matrix. Under this model, for any node $v$, same-label neighbors $u \in \mathcal{N}^+_v$ have features $\mathbf{x}_u = \boldsymbol{\mu}^{y_v} + \boldsymbol{\epsilon}_u^+$, while opposite-label neighbors $u \in \mathcal{N}^-_v$ have features $\mathbf{x}_u = \boldsymbol{\mu}^{1-y_v} + \boldsymbol{\epsilon}_u^-$, where $1-y_v$ denotes the opposite class label in the binary setting. All noise terms $\boldsymbol{\epsilon}_v, \{\boldsymbol{\epsilon}_u^+\}_{u \in \mathcal{N}^+_v}, \{\boldsymbol{\epsilon}_u^-\}_{u \in \mathcal{N}^-_v}$ are independent and follow $\mathcal{N}(\mathbf{0}, \sigma^2_{\text{intra}} \mathbf{I}_d)$. This CSBM setup is consistent with established theoretical frameworks for analyzing heterophily in GNNs \cite{mahomophily}.

An in-depth analysis of the real-world implications of these assumptions and justifications on why our theoretical insights would hold for a multi-class setting is provided in the Appendix. 

\subsection{Aggregation Effect Analysis}

In this section, we analyse how the neighbourhood aggregation impacts the node classification quality under varying homophily/heterophily conditions. 
To analyze this effect, we decompose GNN aggregation based on neighbourhood labels.

\begin{definition}[Label-Based Aggregation]
We decompose the GNN aggregation operation based on neighbor labels as:
\begin{align*}
\mathbf{h}_v &= \frac{1}{d_v + 1}\left(\mathbf{x}_v + \sum_{u \in \mathcal{N}^+_v} \mathbf{x}_u + \sum_{u \in \mathcal{N}^-_v} \mathbf{x}_u\right)
\end{align*}
\end{definition}

To quantify how neighbourhood aggregation affects the original class signal, we introduce signal preservation factor.

\begin{definition}[Signal Preservation Factor]
For node $v$, the signal preservation factor is:
\begin{equation*}
\alpha_v = \frac{1 + d^+_v - d^-_v}{d_v + 1}
\end{equation*}
\end{definition}



We present the following theorem, which characterises how neighbourhood label composition affects representation quality after aggregation.

\begin{restatable}[Label Aggregation Effect]{theorem}{labelaggregation} \label{thm:labelaggregation}
For any node $v$ in the graph, under the stated graph-wide assumptions, the aggregated representation has expected value:
\[
\mathbb{E}[\mathbf{h}_v | y_v] = \frac{1 + d^+_v}{d_v + 1}\boldsymbol{\mu}^{y_v} + \frac{d^-_v}{d_v + 1}\boldsymbol{\mu}^{1-y_v}
\]
The signal variance after aggregation is:
\[
\|\mathbb{E}[\mathbf{h}_v | y_v = 0] - \mathbb{E}[\mathbf{h}_v | y_v = 1]\|^2 = \alpha_v^2 \Delta^2
\]
\\
The noise variance is:
$
\text{Var}[\mathbf{h}_v | y_v] = \frac{\sigma^2_{\text{intra}}}{d_v + 1}
$
\\
The node-specific classification quality is:
$
Q_v = \frac{\alpha_v^2 (d_v + 1) \Delta^2}{\sigma^2_{\text{intra}}}
$
\end{restatable}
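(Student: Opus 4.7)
The plan is to exploit the linearity of the uniform averaging aggregator together with the independence and Gaussian structure built into the CSBM assumption, and then read off each claim by a direct computation. Throughout, I would treat the profile $(d^+_v, d^-_v, d_v)$ as fixed structural parameters of node $v$, so that the conditional statements $\mathbb{E}[\cdot \mid y_v = c]$ are well-defined counterfactuals: the roles of same-label and opposite-label neighbors swap when we flip $y_v$, but their counts do not.

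For the mean, I would substitute the CSBM model into the label-decomposed aggregation formula. Under the assumption, $\mathbb{E}[\mathbf{x}_v \mid y_v] = \boldsymbol{\mu}^{y_v}$, while $\mathbb{E}[\mathbf{x}_u \mid y_v] = \boldsymbol{\mu}^{y_v}$ for $u \in \mathcal{N}^+_v$ and $\mathbb{E}[\mathbf{x}_u \mid y_v] = \boldsymbol{\mu}^{1-y_v}$ for $u \in \mathcal{N}^-_v$. Linearity of expectation, applied term by term to the decomposed sum, then groups the $1 + d^+_v$ same-label contributions against the $d^-_v$ opposite-label contributions and yields the claimed weighted combination with coefficients $(1+d^+_v)/(d_v+1)$ and $d^-_v/(d_v+1)$.

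For the signal separation, I would instantiate the mean formula at $y_v = 0$ and $y_v = 1$, subtract, and factor out $(\boldsymbol{\mu}^0 - \boldsymbol{\mu}^1)$; the scalar coefficient collapses to $(1 + d^+_v - d^-_v)/(d_v+1) = \alpha_v$, so that squaring the Euclidean norm delivers $\alpha_v^2 \Delta^2$. For the noise, $\mathbf{h}_v$ is a scaled sum of $d_v + 1$ independent isotropic Gaussian noise vectors with per-coordinate variance $\sigma^2_{\text{intra}}$, so the standard sum-of-independent-Gaussians calculation gives per-coordinate variance $(d_v + 1)\sigma^2_{\text{intra}} / (d_v+1)^2 = \sigma^2_{\text{intra}}/(d_v+1)$, which is how I would interpret the vector-valued $\mathrm{Var}[\mathbf{h}_v \mid y_v]$. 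Finally, reading $Q_v$ as the Fisher-style signal-to-noise ratio, namely squared inter-class mean separation divided by intra-class per-coordinate variance, yields $Q_v = \alpha_v^2 (d_v+1) \Delta^2 / \sigma^2_{\text{intra}}$ immediately.

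The computations themselves are mechanical, so the main obstacle is conceptual rather than technical: one must be careful about what ``conditioning on $y_v$'' means given that the partition $\mathcal{N}(v) = \mathcal{N}^+_v \cup \mathcal{N}^-_v$ was defined in terms of $y_v$ itself. I would resolve this by viewing the neighbor labels as pre-specified \emph{relative} to $v$, so that a vertex in $\mathcal{N}^+_v$ is simply a neighbor matching whatever label is assigned to $v$, and similarly for $\mathcal{N}^-_v$; under this reading the counterfactual $y_v = 0$ versus $y_v = 1$ leaves $d^+_v$ and $d^-_v$ invariant and only swaps which Gaussian mean a given neighbor contributes, which is exactly what the final formula requires.
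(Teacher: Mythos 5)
Your proposal is correct and follows essentially the same route as the paper's proof: linearity of expectation applied to the label-decomposed aggregation for the mean, subtraction and factoring of $(\boldsymbol{\mu}^0 - \boldsymbol{\mu}^1)$ for the signal variance, the standard independent-Gaussian sum for the noise variance, and the signal-to-noise ratio for $Q_v$. Your added remark about interpreting the conditioning on $y_v$ relative to the fixed profile $(d^+_v, d^-_v, d_v)$ is a clarification the paper leaves implicit, but it does not alter the argument.
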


Employing Theorem \ref{thm:labelaggregation}, we derive the following observations.

\begin{corollary}[Strong Homophily]\label{cor:strong_homophily}
When $d^+_v \gg d^-_v$, the signal preservation factor $\alpha_v \approx 1$, and classification quality scales linearly with degree: $Q_v \propto d_v$.
\end{corollary}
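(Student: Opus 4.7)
The plan is to apply Theorem~\ref{thm:labelaggregation} directly and track how its two ingredients, the signal preservation factor $\alpha_v$ and the multiplicative factor $(d_v+1)$ appearing in $Q_v$, behave as $d^-_v$ becomes negligible relative to $d^+_v$. First, I would rewrite the signal preservation factor in a form that isolates its dependence on the opposite-label neighbours. Using $d_v = d^+_v + d^-_v$, a one-line calculation gives
\[
\alpha_v \;=\; \frac{1 + d^+_v - d^-_v}{d_v + 1} \;=\; 1 - \frac{2 d^-_v}{d_v + 1}.
\]
This form makes the claim transparent: the deviation of $\alpha_v$ from $1$ is controlled entirely by the ratio $d^-_v/(d_v+1)$, which vanishes under the hypothesis $d^+_v \gg d^-_v$. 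Hence $\alpha_v \to 1$, establishing the first assertion.

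With $\alpha_v \approx 1$ in hand, the second assertion follows by direct substitution into the expression $Q_v = \alpha_v^2 (d_v+1)\Delta^2/\sigma^2_{\text{intra}}$ from Theorem~\ref{thm:labelaggregation}. Since $\Delta^2$ and $\sigma^2_{\text{intra}}$ are graph-wide constants that do not depend on $v$, the only remaining $v$-dependence is the linear factor $(d_v+1)$. Therefore $Q_v \propto d_v$ in the large-degree regime, as claimed.

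The main obstacle is not technical but notational: making the informal symbols $\gg$ and $\propto$ precise. I would phrase the corollary as an asymptotic statement, e.g.\ whenever $d^-_v / d^+_v \le \varepsilon$ one has $\alpha_v = 1 - O(\varepsilon)$ and consequently
\[
Q_v \;=\; \bigl(1 - O(\varepsilon)\bigr)^2\, \frac{(d_v+1)\Delta^2}{\sigma^2_{\text{intra}}},
\]
so the leading-order scaling in $d_v$ is linear. Beyond fixing this language, the proof is essentially a substitution into Theorem~\ref{thm:labelaggregation}, so I do not anticipate any genuinely difficult step.
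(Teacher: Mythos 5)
Your proposal is correct and follows the same route the paper takes: the corollary is a direct consequence of Theorem~\ref{thm:labelaggregation}, obtained by observing that $\alpha_v = 1 - \tfrac{2d^-_v}{d_v+1} \approx 1$ when $d^+_v \gg d^-_v$ and substituting into $Q_v = \alpha_v^2(d_v+1)\Delta^2/\sigma^2_{\text{intra}}$. Your explicit rewriting of $\alpha_v$ and the $O(\varepsilon)$ formalization are a slightly more careful presentation of the same argument.
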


In strongly homophilic neighborhoods, aggregation generally benefits nodes regardless of their degree, since $\alpha_v \approx 1$ guarantees that the expected aggregated representation preserves class signal. However,  higher-degree nodes achieve better performance since increased sample (neighborhood) size decreases the probability of ``error''. 

\begin{corollary}[Strong Heterophily]\label{cor:strong_heterophily}
When $d^-_v \gg d^+_v$, $\alpha_v$ transitions from 0 at low degrees to -1 at high degrees. Classification quality shows poor performance at low degrees due to signal cancellation ($Q_v \approx 0$ when $d_v$ is small), but achieves performance comparable to homophilic cases at high degrees ($Q_v \propto d_v$ when $d_v$ is large).
\end{corollary}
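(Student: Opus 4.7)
The plan is to substitute the strong heterophily condition $d^-_v \gg d^+_v$ directly into the closed forms for $\alpha_v$ and $Q_v$ established in Theorem~\ref{thm:labelaggregation}, and then read off both asymptotic endpoints in $d_v$. I would formalize the regime by setting $d^+_v \approx 0$, hence $d^-_v \approx d_v$, which reduces the signal preservation factor to $\alpha_v \approx (1 - d_v)/(d_v + 1)$.

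From this single-variable expression, both claims in the statement follow by inspection. At $d_v = 1$ the numerator vanishes, so $\alpha_v = 0$; as $d_v \to \infty$ we have $\alpha_v \to -1$, which is exactly the $0 \to -1$ transition claimed. Substituting $\alpha_v^2 = (d_v - 1)^2/(d_v + 1)^2$ into $Q_v = \alpha_v^2 (d_v + 1)\Delta^2/\sigma^2_{\text{intra}}$ yields $Q_v = (d_v - 1)^2 \Delta^2 / [(d_v + 1)\sigma^2_{\text{intra}}]$. For small $d_v$ the prefactor $(d_v-1)^2$ is small (exactly zero at $d_v=1$), giving the ``signal cancellation'' region $Q_v \approx 0$; for large $d_v$ the ratio $(d_v-1)^2/(d_v+1)$ grows linearly in $d_v$, recovering the $Q_v \propto d_v \Delta^2/\sigma^2_{\text{intra}}$ rate identified in Corollary~\ref{cor:strong_homophily}.

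A conceptual remark I would include is that the sign flip in $\alpha_v$ is harmless because $\alpha_v$ enters $Q_v$ squared: high-degree heterophilic aggregation merely relabels the class centroids ($\boldsymbol{\mu}^{y_v}$ is mapped to $\boldsymbol{\mu}^{1-y_v}$), a permutation that a downstream linear classifier can learn to undo. This explains \emph{why} extreme heterophily regains the information-preserving behaviour of homophily, rather than being an algebraic coincidence.

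The main obstacle is that the statement uses the informal labels ``low'' and ``high'' degree, whereas $\alpha_v^2$ varies smoothly from $0$ to $1$; I would handle this by phrasing the conclusions as the two limits at $d_v = 1$ and $d_v \to \infty$, or, if a sharper statement is desired, by recording the threshold at which $\alpha_v^2$ first exceeds a chosen constant. A minor technical subtlety is relaxing $d^+_v = 0$ to $d^+_v = o(d^-_v)$, but since $\alpha_v$ and $Q_v$ are jointly continuous in $(d^+_v, d^-_v)$, the approximation only incurs a controlled $O(d^+_v/d^-_v)$ error and does not affect the asymptotic claims.
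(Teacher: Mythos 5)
Your proposal is correct and matches the paper's intended derivation: the corollary is obtained exactly by substituting the extreme-heterophily regime $d^+_v \approx 0$, $d^-_v \approx d_v$ into the closed forms for $\alpha_v$ and $Q_v$ from Theorem~\ref{thm:labelaggregation} and reading off the limits at $d_v = 1$ and $d_v \to \infty$. Your additional remark that the sign flip is harmless because $\alpha_v$ enters $Q_v$ squared mirrors the paper's own interpretation that high-degree heterophilic nodes succeed by ``learning from the opposite relationships.''
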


Strong heterophily isn't inherently bad. It becomes problematic when there aren't enough neighbours to establish a reliable pattern. With sufficient degree, heterophilic nodes can perform as well as homophilic ones by learning from the opposite relationships.

\begin{corollary}[Mixed Homophily/Heterophily]\label{cor:mixed}
When $d^+_v \approx d^-_v$ (balanced same/opposite-label neighbors), $\alpha_v \approx \frac{1}{d_v + 1}$, causing signal cancellation that worsens with degree, leading to poor classification performance.
\end{corollary}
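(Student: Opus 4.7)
The plan is to derive the corollary as an almost direct algebraic specialization of Theorem~\ref{thm:labelaggregation}, then interpret the resulting expressions to explain the ``signal cancellation'' phenomenon. The key observation is that the numerator of $\alpha_v$ is exactly $1 + (d^+_v - d^-_v)$, so balancing $d^+_v$ and $d^-_v$ collapses the only term in the numerator that scales with degree.

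First, I would substitute the assumption $d^+_v \approx d^-_v$ into the definition $\alpha_v = (1 + d^+_v - d^-_v)/(d_v + 1)$. Treating $\approx$ as $|d^+_v - d^-_v| = O(1)$ (and exactly $d^+_v = d^-_v$ in the cleanest case), the numerator becomes $O(1)$, yielding $\alpha_v = \Theta(1/(d_v+1))$, with the exact identity $\alpha_v = 1/(d_v+1)$ when $d^+_v = d^-_v$. This is the promised approximation.

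Next, I would plug this $\alpha_v$ into the signal-variance and classification-quality expressions supplied by Theorem~\ref{thm:labelaggregation}. The post-aggregation signal variance is
\begin{equation*}
\alpha_v^2 \Delta^2 \;\approx\; \frac{\Delta^2}{(d_v+1)^2},
\end{equation*}
which vanishes quadratically in $d_v$, formally capturing the ``signal cancellation'' described in the statement: the two class prototype contributions nearly annihilate each other in expectation. Dividing by the noise variance $\sigma^2_{\text{intra}}/(d_v+1)$ gives
\begin{equation*}
Q_v \;=\; \frac{\alpha_v^2 (d_v+1)\,\Delta^2}{\sigma^2_{\text{intra}}} \;\approx\; \frac{\Delta^2}{(d_v+1)\,\sigma^2_{\text{intra}}},
\end{equation*}
which is monotonically decreasing in $d_v$, so classification quality degrades as degree grows, as claimed.

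I do not foresee a genuine obstacle; the only mild point of care is giving the approximation $d^+_v \approx d^-_v$ a precise meaning so that the ``worsens with degree'' conclusion is not vacuous. Concretely, I would make explicit that whenever $|d^+_v - d^-_v|$ grows slower than $d_v$ (e.g.\ is bounded or $o(d_v)$), the gain factor $\alpha_v^2(d_v+1)$ is $o(1)$, and hence $Q_v \to 0$ as $d_v \to \infty$. This contrasts cleanly with Corollaries~\ref{cor:strong_homophily} and~\ref{cor:strong_heterophily}, where $|\alpha_v|$ approaches $1$ and $Q_v$ grows linearly with $d_v$, thereby tying the three regimes together into a coherent picture that motivates the adaptive depth architecture.
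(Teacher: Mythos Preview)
Your proposal is correct and mirrors exactly what the paper does: the corollary is stated as a direct observation from Theorem~\ref{thm:labelaggregation} with no separate proof, and the accompanying discussion simply notes that balanced neighbourhoods drive $\alpha_v \to 0$ as $d_v$ grows. If anything, your write-up is more explicit than the paper's, since you actually carry through the computation of $Q_v \approx \Delta^2/((d_v+1)\sigma^2_{\text{intra}})$ and articulate the $|d^+_v - d^-_v| = o(d_v)$ regime under which the conclusion holds.
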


Nodes with balanced neighbourhoods experience signal cancellation, as competing signals from same-class and opposite-class neighbours neutralize each other. This effect is particularly severe for high-degree nodes, where $\alpha_v \to 0$ as $d_v$ increases, leaving such nodes with insufficient directional information for reliable classification.

\subsection{Multi-Layer Analysis}

We extend Theorem \ref{thm:labelaggregation} to multi-layer setting to analyze the iterative aggregation effect in GNNs.

\begin{restatable}[Iterative Aggregation Effect]{theorem}{multilayeraggr} \label{thm:multilayeraggr}
Assume a $n$-layer GNN where: (1) label-conditioned features remain independent across layers, and (2) each layer performs the same neighborhood aggregation pattern with degree $d_v$. Then, for node $v$, after $n$ layers:

\begin{itemize}
    \item Signal variance: $\alpha_v^{2n} \Delta^2$
    \item Noise variance: $\frac{\sigma^2_{\text{intra}}}{(d_v + 1)^n}$
    \item Classification quality: $Q_v^{n} = \frac{\alpha_v^{2n} (d_v + 1)^n \Delta^2}{\sigma^2_{\text{intra}}}$
\end{itemize}
\end{restatable}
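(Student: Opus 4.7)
The plan is to proceed by induction on $n$, reusing Theorem~\ref{thm:labelaggregation} as the base case and iterating the per-layer effects on the conditional mean and variance granted by assumptions (1) and (2).

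First I would track how the conditional mean evolves across layers. Writing $\mathbb{E}[\mathbf{h}_v^{(n)} \mid y_v] = a_n \boldsymbol{\mu}^{y_v} + b_n \boldsymbol{\mu}^{1-y_v}$ with $a_n + b_n = 1$, Theorem~\ref{thm:labelaggregation} yields $a_1 - b_1 = (1 + d^+_v - d^-_v)/(d_v + 1) = \alpha_v$. The self-similarity assumption (2) means that every neighbor's neighborhood admits the same $(d^+_v, d^-_v, d_v)$ decomposition, so the same one-step identity governs each neighbor's representation. Substituting this recursive structure into the averaging step gives $a_{n+1} - b_{n+1} = \alpha_v (a_n - b_n)$, and induction then yields $a_n - b_n = \alpha_v^n$. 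Subtracting the two class-conditional means produces an inter-class gap of $\alpha_v^n(\boldsymbol{\mu}^0 - \boldsymbol{\mu}^1)$, hence a signal variance of $\alpha_v^{2n} \Delta^2$.

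Next I would propagate the noise using the cross-layer independence assumption (1). Conditional on $y_v$, the update $\mathbf{h}_v^{(n)} = (d_v+1)^{-1}\bigl(\mathbf{h}_v^{(n-1)} + \sum_{u \in \mathcal{N}(v)} \mathbf{h}_u^{(n-1)}\bigr)$ is an average of $d_v + 1$ conditionally independent terms sharing a common variance $\sigma_{n-1}^2$ (by symmetry combined with (2)), so $\sigma_n^2 = \sigma_{n-1}^2/(d_v+1)$; unrolling this one-step contraction from $\sigma_0^2 = \sigma^2_{\text{intra}}$ gives $\sigma_n^2 = \sigma^2_{\text{intra}}/(d_v+1)^n$. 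The node-specific classification quality is then the squared signal-to-noise ratio $Q_v^n = \alpha_v^{2n}\Delta^2/\sigma_n^2 = \alpha_v^{2n}(d_v+1)^n \Delta^2/\sigma^2_{\text{intra}}$, as claimed.

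The main obstacle is not the induction itself, which is essentially bookkeeping once Theorem~\ref{thm:labelaggregation} is in hand, but the justification and careful deployment of assumptions (1) and (2). Cross-layer independence is a strong simplification: in a real graph, aggregated features at layer $n-1$ share noise through overlapping receptive fields, so a literal variance calculation would carry covariance cross-terms that break the clean $(d_v+1)^{-n}$ scaling. Structural self-similarity likewise glosses over heterogeneity in the neighborhood profiles of $v$ and its neighbors. The cleanest way to handle this is to state both assumptions explicitly at the top of the proof, so that the induction reduces transparently to iterating the single-layer identities and the final expression is obtained by multiplicativity of the signal and noise factors over layers.
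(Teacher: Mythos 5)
Your proposal is correct and follows essentially the same route as the paper's proof: induction on the number of layers, propagating the class-conditional means so that the inter-class gap contracts by a factor of $\alpha_v$ per layer (your $a_n - b_n = \alpha_v^n$ recursion is just a reparametrization of the paper's $\mathbf{r}^{(k)}_0 - \mathbf{r}^{(k)}_1$ bookkeeping), and using the cross-layer independence assumption to divide the noise variance by $d_v+1$ at each step. Your closing remarks on the fragility of assumptions (1) and (2) match the caveats the paper itself raises and defers to its extended analysis with calibration factors.
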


Theorem \ref{thm:multilayeraggr} shows how single-layer effects compound over multiple layers. Signal preservation factor $\alpha_v$ is raised to the power $2n$, meaning if $|\alpha_v| < 1$ (signal degradation), the effect becomes exponentially worse with depth. Conversely, noise reduction compounds beneficially, but the net effect depends on whether signal preservation dominates.

\begin{remark}
While our main analysis assumes oversimplified behaviour, real GNNs exhibit deviations due to oversmoothing and feature correlation. The extended theoretical analysis in the Appendix addresses these practical effects.
\end{remark}
\begin{figure*}[t] %
    \centering
\includegraphics[width=0.9\textwidth]{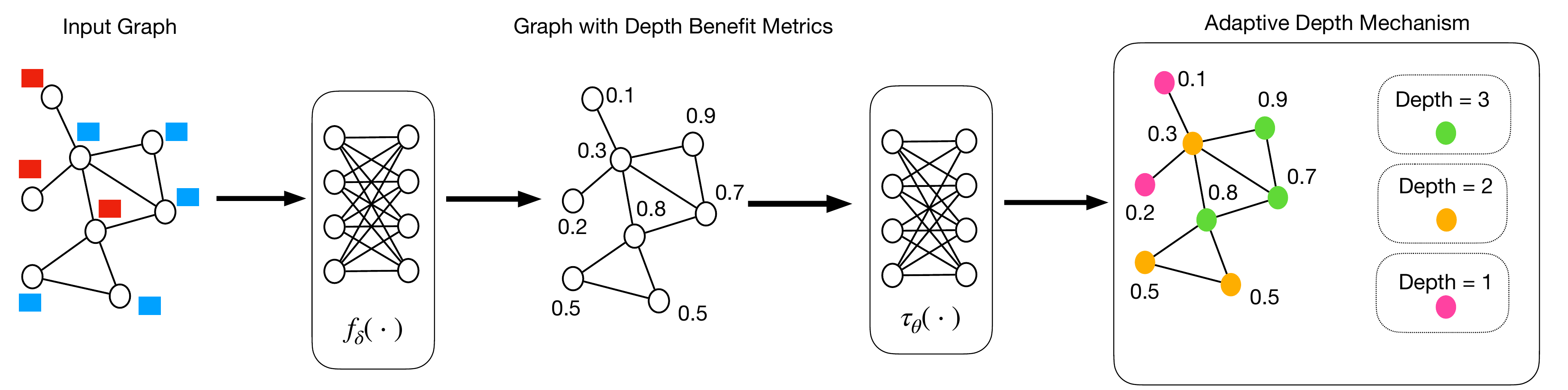}
    \caption{High-level workflow of AD-GNN: First, the depth benefit metric for each node is computed using their node profile and learned label probabilities. Then, the depth for each node is determined based on a learnable threshold mechanism. Nodes that have higher depth benefit metrics will receive more aggregation layers compared to others.}
    \label{fig:model_architecture}
\end{figure*}

\section{Adaptive Depth Graph Neural Networks}

We introduce AD-GNN, a novel architecture that leverages our theoretical findings to enhance message propagation in GNNs by allocating adaptive depth. The key insight is that different nodes benefit differently from deeper layers based on their local neighbourhood structure and multi-layer aggregation effects.

Let $Q_v^n$ denote the classification quality for node $v$ after $n$ layers of message passing, and $Q_v^0$ represents the initial feature quality before any message passing. For node $v$ and target depth $n$, we define the Depth Benefit Metric:

\begin{definition}[Depth Benefit Metric]
For node $v$ and target depth $n$, we define the Depth Benefit Metric:
\begin{equation*}
\varepsilon_v^{n} = \frac{Q_v^n}{Q_v^0} = \left(\alpha_v^2 \cdot (d_v + 1)\right)^n
\end{equation*}
\end{definition}

The metric represents the multiplicative improvement in classification quality where $\varepsilon_v^{n} > 1$ indicates benefit, $\varepsilon_v^{n} = 1$ indicates no change, and $\varepsilon_v^{n} < 1$ indicates degradation. 


\subsection{Stopping Depth Assignment}
Let $t_{\text{max}}$ be the maximum allowable depth in the network. For each vertex $v \in V$, we define its stopping depth $T(v)$ as:
\begin{equation*}
T(v) = \max \left\{ t \in \{1, 2, \ldots, t_{\text{max}}\} \,\middle|\, \tilde{\varepsilon}_v^{t_{\text{max}}} \geq \tau_{\boldsymbol{\theta}}(t) \right\}
\end{equation*}

where $\tilde{\varepsilon}_v^{t_{\text{max}}}$ is the depth benefit score normalized to $[0,1]$ using min-max scaling across all nodes, and $\tau_{\boldsymbol{\theta}}: \mathbb{N} \rightarrow [0,1]$ is a monotonically increasing adaptive threshold function parameterized by $\boldsymbol{\theta}$. The monotonically increasing property ensures progressive filtering where nodes with lower depth benefit scores are gradually excluded as network depth increases, preventing nodes that cease to benefit from message passing at layer $t$ from re-entering at deeper layers $t' > t$. The threshold function is defined as:
\begin{equation*}
\tau_{\boldsymbol{\theta}}(t) = \lambda + (1-\lambda) \cdot \theta(t)
\end{equation*}

where $\lambda \in [0,1]$ is a hyperparameter, and $\theta(t): \mathbb{N} \rightarrow [0,1]$ is a learnable monotonically increasing function. The parameter $\lambda$ establishes a minimum threshold ensuring $\tau_{\boldsymbol{\theta}}(t) \in [\lambda, 1]$, while the learnable component $\theta(t)$ adaptively optimizes the threshold across layers. 


\subsection{Message Passing Framework}
For each vertex $v$ and step $t \leq T(v)$, the message passing consists of aggregation and update steps:
\begin{align*}
\mathbf{m}_v^{(t)} &= \text{AGG}\left( \{\!\!\{ \mathbf{h}_u^{(\min\{t-1, T(u)\})} \mid u \in \mathcal{N}(v) \}\!\!\} \right) \\
\mathbf{h}_v^{(t)} &= \text{UPD}\left( \mathbf{h}_v^{(t-1)}, \mathbf{m}_v^{(t)} \right)
\end{align*}

For any $t > T(v)$, we have $\mathbf{h}_v^{(t)} = \mathbf{h}_v^{(t-1)}$. The adaptive depth mechanism can enhance existing GNN architectures such as GCN~\cite{kipf2017semi}, GAT~\cite{velivckovic2018graph}, and GraphSAGE~\cite{hamilton2017inductive} by utilizing their original aggregation and update functions along with node-specific depth allocation based on the depth benefit metric.


\subsection{Depth Benefit Metric Calculation}
Computing the depth benefit metric requires estimating $\alpha_v$ for each node, which is challenging due to incomplete label information during semi-supervised training. We use feature similarity to estimate same-label probabilities between adjacent nodes:
\begin{equation*}
p_{uv} = f_{\boldsymbol{\delta}}(\mathbf{h}_u, \mathbf{h}_v)
\end{equation*}

where $f_{\boldsymbol{\delta}}: \mathbb{R}^d \times \mathbb{R}^d \rightarrow [0,1]$ is a learnable, permutation-invariant function that maps pairs of node embeddings to same-label probabilities. The expected counts for same-label and opposite-label neighbors are:
\begin{equation*}
\hat{d}^+_v = \sum_{u \in \mathcal{N}(v)} f_{\boldsymbol{\delta}}(\mathbf{h}_u, \mathbf{h}_v), \quad 
\hat{d}^-_v = \sum_{u \in \mathcal{N}(v)} \left(1 - f_{\boldsymbol{\delta}}(\mathbf{h}_u, \mathbf{h}_v)\right)
\end{equation*}

The estimated signal preservation factor and depth benefit metric are:
\begin{equation*}
\hat{\alpha}_v = \frac{1 + \hat{d}^+_v - \hat{d}^-_v}{d_v + 1}, \quad 
\hat{\varepsilon}_v^{t_{\text{max}}} = \left(\hat{\alpha}_v^2 \cdot (d_v + 1)\right)^{t_{\text{max}}}
\end{equation*}


This creates a differentiable mechanism that enables end-to-end optimization where depth allocation adapts based on learned similarity assessments and predicted depth benefits.

\subsection{Training Objective}
To ensure $f_{\boldsymbol{\delta}}$ learns meaningful similarity assessments, we incorporate regularization using available label information on training nodes:
\begin{align*}
\mathcal{L}_{\text{reg}} &= -\frac{1}{|E_{\text{train}}|} \sum_{(u,v) \in E_{\text{train}}} \left[ y_{uv} \log \left(f_{\boldsymbol{\delta}}(\mathbf{h}_u, \mathbf{h}_v)\right) \right. \nonumber \\
&\quad \left. + (1-y_{uv}) \log\left(1-f_{\boldsymbol{\delta}}(\mathbf{h}_u, \mathbf{h}_v)\right) \right]
\end{align*}

where $E_{\text{train}} = \{(u,v) \in E : u,v \in V_{\text{train}}\}$ is the set of edges between training nodes, and $y_{uv} = \mathbb{I}[y_u = y_v]$ is the true same-label indicator. The total training objective becomes:
\begin{equation*}
\mathcal{L}_{\text{total}} = \mathcal{L}_{\text{task}} + \mathcal{L}_{\text{reg}}
\end{equation*}

where $\mathcal{L}_{\text{task}}$ is the downstream task loss. The High-level overview of AD-GNN is depicted in Figure \ref{fig:model_architecture}.

\subsection{Fast Variant}
To accelerate computation, we derive AD-GNN$_{\text{fast}}$ by replacing the learnable similarity function with a static degree-based approximation. Leveraging high-degree assortativity \cite{arcagni2017higher}, the principle that high-degree nodes tend to connect to other high-degree nodes and share similar labels in the network, we compute same-label probability as:
\begin{equation*}
p_{uv} = \frac{d_u \times d_v}{\max_{(i,j) \in E} (d_i \times d_j)}
\end{equation*}

The estimated depth benefit metric becomes:
\begin{equation*}
\hat{\varepsilon}_v^{t_{\text{max}}} = \left(\left(\hat{\alpha}_{v,\text{degree}}\right)^2 \cdot (d_v + 1)\right)^{t_{\text{max}}}
\end{equation*}

where $\hat{\alpha}_{v,\text{degree}}$ is computed using the degree-based same-label probability estimates. This variant eliminates the computational overhead of learning $f_{\boldsymbol{\delta}}$ and requires no regularization term, limiting the training objective to the downstream task loss alone.

\subsection{Complexity Analysis}

We analyse the complexity of AD-GNN variants separately from the backbone GNN architecture. AD-GNN incurs $\mathcal{O}(|E| \times d)$ complexity for feature-based label probability computation, where $d$ is the feature dimension, $\mathcal{O}(|E| + |V|)$ for depth benefit metric computation, $\mathcal{O}(|E| + |V|)$ per-layer complexity for threshold filtering mechanism, and $\mathcal{O}(|E_{\text{train}}|)$ for regularization term. This takes the total complexity of AD-GNN to $\mathcal{O}(|E| \times d + t_{\text{max}} \times (|E| + |V|))$. AD-GNN$_{\text{fast}}$ achieves a reduced complexity of $\mathcal{O}(t_{\text{max}} \times (|E| + |V|))$ by eliminating feature-based label probability computation using degree-based similarity approximation, which incurs only $\mathcal{O}(|E|)$ complexity. Additionally, the fast variant does not require computing the regularization term, further reducing training overhead. Note that, since AD-GNN progressively filters edges at each layer, it also reduces the backbone GNN complexity by operating on smaller edge sets.
\begin{table*}[!t]
\centering
\renewcommand\arraystretch{1.1}
\scalebox{0.67}{\begin{tabular}{c| c c c c c| c c c c c c}
\specialrule{.1em}{.05em}{.05em} 
Methods & Cora-ML & Citeseer & Pubmed & Photo & DBLP & Film & Squirrel & Chameleon & Cornell  &Wisconsin & Texas \\ 
\toprule
{GCN} & {87.07 \sd{1.21}} & {76.68 \sd{1.64}} & {86.74 \sd{0.47}} & { 89.30 \sd{0.82}}  & {83.93 \sd{0.34}} & {30.26 \sd{0.79}} & {39.47 \sd{1.47}} & {40.89 \sd{4.12}}\ & {55.14 \sd{8.46}} & {61.60 \sd{7.00}} & {60.00 \sd{6.45}} \\
{AD-GCN} & {87.32 \sd{1.25}} & \cellcolor{blue!15}{ {79.14 \sd{1.00}}} & {88.39 \sd{0.32}} & {94.10 \sd{0.31}}  & \cellcolor{blue!15}{{84.14 \sd{0.44}}} & \cellcolor{blue!15}{{42.54 \sd{1.15}}} & {40.04 \sd{0.99}} & \cellcolor{blue!15}{{43.66 \sd{0.73}}} & \cellcolor{blue!15}{{88.51 \sd{4.87}}} & {93.88 \sd{3.03}} & \cellcolor{blue!15}{{92.30 \sd{4.52}}} \\
{AD-GCN$_{\text{fast}}$} & {\cellcolor{blue!15}{87.34 \sd{1.35}}} & {79.13 \sd{0.99}} & \cellcolor{blue!15}{ {88.41 \sd{0.37}}} & \cellcolor{blue!15}{{94.10 \sd{0.31}}}  & {83.90 \sd{0.97}} & {41.39 \sd{1.46}} & \cellcolor{blue!15}{{40.88 \sd{1.09}}} & {43.40 \sd{0.56}}\ & {87.02 \sd{3.49}} & \cellcolor{blue!15}{{94.38 \sd{2.86}}} & { 91.64 \sd{5.16}} \\ 
\midrule
{GAT} & {84.12 \sd{0.55}} & {75.46 \sd{1.72}} & {87.24 \sd{0.55}} & { 90.81 \sd{0.22}}  & {80.61 \sd{1.21}} & {26.28 \sd{1.73}} & {35.62 \sd{2.06}} & { 39.21 \sd{3.08}}\ & {53.64 \sd{11.1}} & {60.00 \sd{11.0}} & {61.21 \sd{8.17}} \\
{AD-GAT} & \cellcolor{blue!15}{ {85.02 \sd{1.64}}} & {79.92 \sd{0.76}} & {87.38 \sd{0.33}} & \cellcolor{blue!15}{{94.03 \sd{0.34}}}  & \cellcolor{blue!15}{{83.94 \sd{0.40}}} & \cellcolor{blue!15}{{41.25 \sd{0.77}}} & {36.73 \sd{0.83}} & \cellcolor{blue!15}{{40.52 \sd{1.55}}} & \cellcolor{blue!15}{{86.17 \sd{4.69}}} & {91.50 \sd{2.42}} & {90.49 \sd{5.72}} \\
{AD-GAT$_{\text{fast}}$} & {84.37 \sd{1.80}} & \cellcolor{blue!15}{{79.95 \sd{0.78}}} & \cellcolor{blue!15}{{87.43 \sd{0.34}}} & {93.84 \sd{0.33}}  & {83.45 \sd{0.43}} & {41.03 \sd{0.96}} & \cellcolor{blue!15}{{36.92 \sd{0.99}}} & {40.36 \sd{2.54}}\ & {85.96 \sd{3.18}} & \cellcolor{blue!15}{{92.62 \sd{2.82}}} & \cellcolor{blue!15}{{90.82 \sd{4.65}}} \\ 
\midrule
{GraphSAGE} & {86.52 \sd{1.32}} & {76.04 \sd{1.30}} & { 88.45 \sd{0.50}} & {94.23 \sd{0.62}}  & {86.16 \sd{0.50}} & {34.23 \sd{0.99}} & {36.09 \sd{1.99}} & {37.77 \sd{4.14}}\ & {75.95 \sd{5.01}} & { 81.18 \sd{5.56}} & {82.43 \sd{6.14}} \\
{AD-GraphSAGE} & \cellcolor{blue!15}{{87.14 \sd{1.56}}} & {79.84 \sd{1.26}} & \cellcolor{blue!15}{{88.99 \sd{0.64}}} & \cellcolor{blue!15}{{94.61 \sd{0.35}}}  & {85.00 \sd{1.60}} & \cellcolor{blue!15}{{40.92 \sd{1.46}}} & {40.88 \sd{0.87}} & \cellcolor{blue!15}{{40.10 \sd{1.45}}} & {89.57 \sd{4.30}} & \cellcolor{blue!15}{{94.62 \sd{2.56}}} & \cellcolor{blue!15}{{92.95 \sd{2.84}}} \\
{AD-GraphSAGE$_{\text{fast}}$} & {87.11 \sd{1.63}} & \cellcolor{blue!15}{{79.88 \sd{1.27}}} & {88.88 \sd{0.60}} & {94.54 \sd{0.41}}  & \cellcolor{blue!15}{{85.03 \sd{1.39}}} & {40.90 \sd{1.49}} & \cellcolor{blue!15}{{41.08 \sd{0.73}}} & {39.79 \sd{1.87}}\ & \cellcolor{blue!15}{{90.64 \sd{4.28}}} & {94.25 \sd{2.38}} & {92.79 \sd{2.95}} \\
\midrule
{MixHop} & {87.29 \sd{1.19}} & {70.75 \sd{2.95}} & {80.75 \sd{2.29}} & {94.83 \sd{0.41}}  & {84.27 \sd{0.31}} & {32.22 \sd{2.34}} & {38.85 \sd{0.89}} & {42.94 \sd{1.01}}\ & {73.51 \sd{6.34}} & {75.88 \sd{4.90}} & { 77.84 \sd{7.73}} \\
{AD-MixHop} & {87.66 \sd{1.24}} & \cellcolor{blue!15}{{81.01 \sd{1.36}}} & {90.09 \sd{0.58}} & \cellcolor{blue!15}{{95.09 \sd{0.57}}}  & {84.26 \sd{0.81}} & \cellcolor{blue!15}{{43.37 \sd{1.17}}} & {39.76 \sd{0.90}} & \cellcolor{blue!15}{{46.29 \sd{1.19}}} & \cellcolor{blue!15}{{90.21 \sd{3.59}}} & \cellcolor{blue!15}{{94.75 \sd{2.22}}} & \cellcolor{blue!15}{{94.43 \sd{2.56}}} \\
{AD-MixHop$_{\text{fast}}$} & \cellcolor{blue!15}{{87.73 \sd{1.31}}} & {80.95 \sd{1.33}} & \cellcolor{blue!15}{{90.23 \sd{0.42}}} & {94.55 \sd{0.57}}  & \cellcolor{blue!15}{{84.46 \sd{0.79}}} & {43.14 \sd{1.80}} & \cellcolor{blue!15}{{40.27 \sd{1.05}}} & {45.05 \sd{1.83}}\ & {90.00 \sd{3.16}} & {92.00 \sd{3.41}} & {90.00 \sd{5.65}} \\ 
\midrule
{GATv2} & {85.10 \sd{1.84}} & {76.31 \sd{1.62}} & {88.77 \sd{0.39}} & {94.03 \sd{0.44}}  & \cellcolor{blue!15}{{84.85 \sd{0.61}}} & {34.90 \sd{0.79}} & {35.24 \sd{0.63}} & {42.53 \sd{1.18}}\ & { 61.35 \sd{3.21}} & {64.12 \sd{4.81}} & {67.84 \sd{4.75}} \\
{AD-GATv2} & \cellcolor{blue!15}{{85.17 \sd{1.50}}} & \cellcolor{blue!15}{{80.10 \sd{0.99}}} & \cellcolor{blue!15}{{89.26 \sd{0.58}}} & \cellcolor{blue!15}{{94.33 \sd{0.70}}}  & {84.26 \sd{0.52}} & {40.21 \sd{2.11}} & {37.96 \sd{0.94}} & \cellcolor{blue!15}{{42.99 \sd{1.25}}} & {86.38 \sd{4.48}} & {91.25 \sd{1.94}} & {90.16 \sd{4.69}} \\
{AD-GATv2$_{\text{fast}}$} & {85.02 \sd{1.26}} & {79.11 \sd{1.48}} & {88.25 \sd{0.24}} & {94.23 \sd{0.68}}  & {84.26 \sd{0.52}} & \cellcolor{blue!15}{{42.02 \sd{1.88}}} & \cellcolor{blue!15}{{40.82 \sd{0.41}}} & {42.63 \sd{1.03}}\ & \cellcolor{blue!15}{{87.66 \sd{5.19}}} & \cellcolor{blue!15}{{92.37 \sd{2.20}}} & \cellcolor{blue!15}{{92.46 \sd{4.47}}} \\ 
\midrule
{DirGNN} & {85.66 \sd{0.31}} & {77.71 \sd{0.78}} & {86.94 \sd{0.55}} & {95.38 \sd{0.32}}  & {81.22 \sd{0.54}} & {35.76 \sd{1.68}} & {38.67 \sd{1.09}} & \cellcolor{blue!15}{{42.94 \sd{1.66}}} & {76.51 \sd{6.14}} & {80.50 \sd{5.50}} & {76.25 \sd{6.31}} \\
{AD-DirGNN} & \cellcolor{blue!15}{{87.25 \sd{1.42}}} & \cellcolor{blue!15}{{78.36 \sd{1.82}}} & \cellcolor{blue!15}{{89.35 \sd{0.30}}} & \cellcolor{blue!15}{{95.55 \sd{0.51}}}  & \cellcolor{blue!15}{{83.52 \sd{0.39}}} & \cellcolor{blue!15}{{41.92 \sd{1.82}}} & \cellcolor{blue!15}{{40.58 \sd{0.94}}} & {42.06 \sd{0.77}} & \cellcolor{blue!15}{{91.70 \sd{2.60}}} & \cellcolor{blue!15}{{95.13 \sd{1.89}}} & \cellcolor{blue!15}{{92.95 \sd{2.21}}} \\
{AD-DirGNN$_{\text{fast}}$} & {85.85 \sd{1.90}} & {78.35 \sd{1.80}} & {89.07 \sd{0.29}} & {94.99 \sd{0.59}}  & {82.80 \sd{0.65}} & {41.23 \sd{1.75}} & {39.36 \sd{1.12}} & {41.70 \sd{1.07}}\ & {89.57 \sd{3.86}} & {93.38 \sd{2.50}} & {92.79 \sd{3.04}} \\ 
\bottomrule
\end{tabular}}
\caption{
Node classification accuracy ± standard deviation (\%). The best results are highlighted. Baseline results are sourced from \citet{suresh2021breaking,platonovcritical,zheng2023finding}, and \citet{chen2025graph}.}
\label{Tab:node-classification-baselines}
\end{table*}

\section{Experiments}


\paragraph{Datasets} We evaluate our approach for the node classification task using 11 datasets that cover both homophilic and heterophilic settings. Homophilic datasets include Cora ML, Citeseer, Pubmed, and DBLP from the CitationFull benchmark \cite{bojchevski2018deep}, as well as the Photo dataset from the Amazon benchmark \cite{shchur2018pitfalls}. Heterophilic datasets include Texas, Cornell, Wisconsin, Squirrel, Chameleon, and
Film datasets from WebKB benchmark \cite{pei2020geom}. Additionally, we employ the ogbn-arxiv dataset from OGB benchmark \cite{hu2020open} for our scalability analysis.

Additional details, including dataset statistics, model hyperparameters, computational resources, and implementation details, are provided in the Appendix.

\paragraph{Baselines} We employ three classical GNNs: GCN \cite{kipf2017semi}, GAT \cite{velivckovic2018graph}, and GraphSAGE \cite{hamilton2017inductive}, along with three modern GNNs: MixHop \cite{abu2019mixhop}, GATv2 \cite{brodyattentive}, and DirGNN \cite{rossi2024edge}, as our backbones. 

\paragraph{Evaluation Setting} We use a 60/20/20 random split strategy for the training/validation/test sets and report the mean and standard deviation of accuracy over 10 random initialization, similar to the setup in \citet{suresh2021breaking,zheng2023finding}. For Squirrel and Chameleon datasets, we employ the data splits provided by \citet{platonovcritical}, which contain filtered datasets with duplicate nodes removed. We report baseline results from previous papers using the same experimental setup. If unavailable, we generate baseline results based on the hyperparameters from the original papers. When a baseline model is modified using the AD-GNN, the resulting version is named with the prefix ``AD-'', such as AD-GCN. Also, the AD-GNN fast variant is denoted by an additional subscript $_{\text{fast}}$, such as AD-GCN$_{\text{fast}}$.

\paragraph{Exp–1. Node Classification}

We present the node classification results in Table \ref{Tab:node-classification-baselines}. Our approach consistently outperforms the baseline methods in both homophilic and heterophilic graph benchmarks. Notably, our adaptive layer mechanism demonstrates greater performance improvements on heterophilic graphs compared to homophilic ones. This is mainly due to heterophilic datasets being more vulnerable to signal degradation than homophilic ones, as demonstrated in the theoretical analysis. Additionally, we observe that our method enhances the performance of GNNs that are designed with inductive biases for heterophilic graph settings, such as MixHop and DirGNN. This shows that our approach complements these models by capturing additional structural and label-dependent information that might otherwise be overlooked. Furthermore, the fast variant of AD-GNN also performs well, achieving results comparable to and sometimes surpassing those of the original AD-GNN, offering a scalable yet powerful solution. 

\paragraph{Exp–2. Case Study} We conduct experiments to empirically validate the observations presented in Theorem \ref{thm:labelaggregation}. In these experiments, we generate a synthetic graph using a stochastic block model with controllable homophily by explicitly forming a specified proportion of edges between same-class versus different-class nodes. We then measure the GCN performance under two cases.


In Figure \ref{fig:theory_validation}(a), we evaluate performance across varying homophily degrees, from ideal heterophily to ideal homophily. Performance remains stable under both extremes, confirming Corollary \ref{cor:strong_homophily} and Corollary \ref{cor:strong_heterophily}. However, we observe performance decline in mixed homophily scenarios due to signal cancellation between neighbors with balanced same or opposite-label configurations, consistent with Corollary \ref{cor:mixed}.



Figure \ref{fig:theory_validation}(b) validates the low-degree scenario highlighted in Corollary \ref{cor:strong_heterophily}. In this scenario, we examine strong heterophily while avoiding aggregation of nodes that fall below a certain degree threshold. The results demonstrate that excluding low-degree nodes (specifically, those with a degree of 1-2) from aggregation can improve performance. This improvement is attributed to the complete signal cancellation that occurs in low-degree nodes under strong heterophily conditions. By avoiding these nodes, we can achieve better performance. Conversely, preventing aggregation for high-degree nodes results in a decline in performance.

\begin{figure}[H]
    \centering
    \begin{minipage}[b]{0.49\linewidth}
        \centering
        \resizebox{1.05\linewidth}{!}{
        \begin{tikzpicture}
            \begin{axis}[    
                xlabel={\LARGE Level of Homophily},    
                ylabel={\LARGE Accuracy (\%)},
                xlabel style={font=\normalsize},
                ylabel style={font=\normalsize},
                xmin=0, xmax=1,    
                ymin=40, ymax=100,    
                xtick={0, 0.2, 0.4, 0.6, 0.8, 1.0},    
                ytick={40, 60, 80, 100},    
                ymajorgrids=true,    
                grid style=dashed,
                ylabel style={font=\bfseries},
                xlabel style={font=\bfseries}
            ]
            \addplot[    
                color=blue,    
                mark=o,
                smooth,
                ]
                coordinates {(0, 91.1)(0.1, 83.05)(0.2, 70.15)(0.3, 53.15)(0.4, 48.8)(0.5, 49.8)(0.6, 57.45)(0.7, 72.55)(0.8, 84.6)(0.9, 91.8)(1, 97.1)
                };
            \end{axis}
        \end{tikzpicture}
        }
        \caption*{(a)}
    \end{minipage}\hfill
    \begin{minipage}[b]{0.49\linewidth}
        \centering
        \resizebox{1.05\linewidth}{!}{
        \begin{tikzpicture}
            \begin{axis}[    
                xlabel={\LARGE Degree Threshold},    
                ylabel={\LARGE Accuracy(\%)},
                xlabel style={font=\normalsize},
                ylabel style={font=\normalsize},
                xmin=0, xmax=6,    
                ymin=60, ymax=100,    
                xtick={0, 1, 2, 3, 4, 5, 6},    
                ytick={60, 70, 80, 90, 100},    
                ymajorgrids=true,    
                grid style=dashed,
                ylabel style={font=\bfseries},
                xlabel style={font=\bfseries}
            ]
            \addplot[    
                color=red,    
                mark=o,
                smooth,
                ]
                coordinates {(0, 91.1)(1, 93)(2, 94)(3, 88.3)(4, 79.2)(5, 72.75)(6, 60.95)
                };
            \end{axis}
        \end{tikzpicture}
        }
        \caption*{(b)}
    \end{minipage}
    \caption{Case studies to empirically validate observations from Theorem \ref{thm:labelaggregation}.}
    \label{fig:theory_validation}
\end{figure}

\paragraph{Exp–3. Oversmoothing Analysis}

Figure \ref{fig:oversmoothing_performance} shows AD-GNN performance under varying layer depth. While traditional GNNs exhibit rapid degradation with increased depth, AD-GNN variants maintain consistent performance across deeper layers, effectively mitigating oversmoothing. This robustness can be attributed to our adaptive layer mechanism, which regulates information propagation to prevent excessive signal degradation.




\begin{figure}[H]
    \centering
    
    \begin{center}
        \begin{tikzpicture}
            \draw[red, line width=1.5pt] (0,0) -- (0.5,0) 
                node[pos=0.5] {\pgfuseplotmark{o}};
            \node[right, font=\small] at (0.6,0) {GCN};
            
            \draw[blue, line width=1.5pt] (2.2,0) -- (2.7,0) 
                node[pos=0.5] {\pgfuseplotmark{square}};
            \node[right, font=\small] at (2.8,0) {GAT};
            
            \draw[red, line width=1.5pt, dashed] (4.2,0) -- (4.7,0) 
                node[pos=0.5] {\pgfuseplotmark{triangle}};
            \node[right, font=\small] at (4.8,0) {AD-GCN};
            
            \draw[blue, line width=1.5pt, dashed] (6.6,0) -- (7.1,0) 
                node[pos=0.5] {\pgfuseplotmark{diamond}};
            \node[right, font=\small] at (7.2,0) {AD-GAT};
        \end{tikzpicture}
    \end{center}
    
    \vspace{0.3cm}
    
    \begin{minipage}[b]{0.49\linewidth}
        \centering
        \resizebox{1.05\linewidth}{!}{
        \begin{tikzpicture}
            \begin{axis}[    
                xlabel={\LARGE Layer Depth},    
                ylabel={\LARGE Accuracy (\%)},
                xlabel style={font=\normalsize},
                ylabel style={font=\normalsize},
                xmin=0, xmax=6,    
                ymin=0, ymax=85,    
                xtick={0, 1, 2, 3, 4, 5, 6},
                xticklabels={1, 2, 4, 8, 16, 32, 64},    
                ytick={0, 20, 40, 60, 80},    
                ymajorgrids=true,    
                grid style=dashed,
                ylabel style={font=\bfseries},
                xlabel style={font=\bfseries}
            ]
            \addplot[    
                color=red,    
                mark=o,
                line width=1.5pt,
                ]
                coordinates {(0, 74.24)(1, 76.68)(2, 76.17)(3, 60.78)(4, 11.38)(5, 1.71)(6, 1.71)
                };
            
            \addplot[    
                color=blue,    
                mark=square,
                line width=1.5pt,
                ]
                coordinates {(0, 73.86)(1, 75.46)(2, 74.35)(3, 60.97)(4, 14.86)(5, 1.9)(6, 1.59)
                };
            
            \addplot[    
                color=red,    
                mark=triangle,
                line width=1.5pt,
                dashed,
                ]
                coordinates {(0, 77.65)(1, 79.14)(2, 78.35)(3, 77.9)(4, 77.22)(5, 77.63)(6, 76.92)
                };
            
            \addplot[    
                color=blue,    
                mark=diamond,
                line width=1.5pt,
                dashed,
                ]
                coordinates {(0, 77.34)(1, 79.92)(2, 78.09)(3, 78.09)(4, 77.76)(5, 77.83)(6, 76.88)
                };
            \end{axis}
        \end{tikzpicture}
        }
        \caption*{(a) Citeseer}
    \end{minipage}\hfill
    \begin{minipage}[b]{0.49\linewidth}
        \centering
        \resizebox{1.05\linewidth}{!}{
        \begin{tikzpicture}
            \begin{axis}[    
                xlabel={\LARGE Layer Depth},    
                ylabel={\LARGE Accuracy (\%)},
                xlabel style={font=\normalsize},
                ylabel style={font=\normalsize},
                xmin=0, xmax=6,    
                ymin=0, ymax=100,    
                xtick={0, 1, 2, 3, 4, 5, 6},
                xticklabels={1, 2, 4, 8, 16, 32, 64},    
                ytick={0, 20, 40, 60, 80, 100},    
                ymajorgrids=true,    
                grid style=dashed,
                ylabel style={font=\bfseries},
                xlabel style={font=\bfseries}
            ]
            \addplot[    
                color=red,    
                mark=o,
                line width=1.5pt,
                ]
                coordinates {(0, 59.01)(1, 60)(2, 59.55)(3, 57.48)(4, 39.91)(5, 25.5)(6, 27.39)
                };
            
            \addplot[    
                color=blue,    
                mark=square,
                line width=1.5pt,
                ]
                coordinates {(0, 59.54)(1, 61.21)(2, 58.28)(3, 57.65)(4, 41.71)(5, 18.64)(6, 6.76)
                };
            
            \addplot[    
                color=red,    
                mark=triangle,
                line width=1.5pt,
                dashed,
                ]
                coordinates {(0, 88.52)(1, 92.3)(2, 85.25)(3, 85.25)(4, 86.89)(5, 86.89)(6, 83.61)
                };
            
            \addplot[    
                color=blue,    
                mark=diamond,
                line width=1.5pt,
                dashed,
                ]
                coordinates {(0, 83.11)(1, 90.49)(2, 88.52)(3, 88.52)(4, 90.16)(5, 88.52)(6, 83.61)
                };
            \end{axis}
        \end{tikzpicture}
        }
        \caption*{(b) Texas}
    \end{minipage}
    
    \caption{ Oversmoothing comparison.}
    \label{fig:oversmoothing_performance}
\end{figure}



\paragraph{Exp–4. Hyperparameter Sensitivity Analysis}

We analyse the impact of hyperparameter $\lambda$ for both homophilic and heterophilic datasets in Figure \ref{fig:lambda_performance}. Degree distributions for corresponding datasets are depicted in Figure \ref{fig:degree_distribution}.

\vspace{-1.5em}

\begin{figure}[H]
    \centering
    
    \begin{center}
        \begin{tikzpicture}
            \draw[blue, line width=1.5pt] (0,0.3) -- (0.5,0.3) 
                node[pos=0.5] {\pgfuseplotmark{o}};
            \node[right, font=\small, anchor=west] at (0.6,0.3) {Citeseer};
            
            \draw[red, line width=1.5pt] (2.4,0.3) -- (2.9,0.3) 
                node[pos=0.5] {\pgfuseplotmark{square}};
            \node[right, font=\small, anchor=west] at (3.0,0.3) {Pubmed};
            
            \draw[green!60!black, line width=1.5pt] (4.6,0.3) -- (5.1,0.3) 
                node[pos=0.5] {\pgfuseplotmark{triangle}};
            \node[right, font=\small, anchor=west] at (5.2,0.3) {DBLP};
            
            \draw[purple, line width=1.5pt] (0,-0.3) -- (0.5,-0.3) 
                node[pos=0.5] {\pgfuseplotmark{diamond}};
            \node[right, font=\small, anchor=west] at (0.6,-0.3) {Chameleon};
            
            \draw[orange, line width=1.5pt] (2.4,-0.3) -- (2.9,-0.3) 
                node[pos=0.5] {\pgfuseplotmark{star}};
            \node[right, font=\small, anchor=west] at (3.0,-0.3) {Film};
            
            \draw[cyan!60!black, line width=1.5pt] (4.6,-0.3) -- (5.1,-0.3) 
                node[pos=0.5] {\pgfuseplotmark{pentagon}};
            \node[right, font=\small, anchor=west] at (5.2,-0.3) {Texas};
        \end{tikzpicture}
    \end{center}
    
    \vspace{0.3cm}
    
    \begin{minipage}[b]{0.49\linewidth}
        \centering
        \resizebox{1.05\linewidth}{!}{
        \begin{tikzpicture}
            \begin{axis}[    
                xlabel={\LARGE  $\lambda$},    
                ylabel={\LARGE Accuracy (\%)},
                xlabel style={font=\normalsize},
                ylabel style={font=\normalsize},
                xmin=0, xmax=0.9,    
                ymin=75, ymax=90,    
                xtick={0, 0.1, 0.2, 0.3, 0.4, 0.5, 0.6, 0.7, 0.8, 0.9},
                ytick={75, 77, 79, 81, 83, 85, 87, 89},    
                ymajorgrids=true,    
                grid style=dashed,
                ylabel style={font=\bfseries},
                xlabel style={font=\bfseries}
            ]
            \addplot[    
                color=blue,    
                mark=o,
                line width=1.5pt,
                ]
                coordinates {(0, 79.14)(0.1, 76.58)(0.2, 76.51)(0.3, 76.4)(0.4, 76.4)(0.5, 76.44)(0.6, 76.44)(0.7, 76.44)(0.8, 76.61)(0.9, 76.44)
                };
            
            \addplot[    
                color=red,    
                mark=square,
                line width=1.5pt,
                ]
                coordinates {(0, 88.39)(0.1, 87.81)(0.2, 87.75)(0.3, 87.83)(0.4, 87.71)(0.5, 87.72)(0.6, 87.82)(0.7, 87.77)(0.8, 87.77)(0.9, 87.74)
                };
            
            \addplot[    
                color=green!60!black,    
                mark=triangle,
                line width=1.5pt,
                ]
                coordinates {(0, 84.14)(0.1, 77.28)(0.2, 77.18)(0.3, 77.1)(0.4, 77.07)(0.5, 77.08)(0.6, 77.08)(0.7, 77.09)(0.8, 77.09)(0.9, 77.09)
                };
            \end{axis}
        \end{tikzpicture}
        }
        \caption*{(a) Homophilic Graphs}
    \end{minipage}\hfill
    \begin{minipage}[b]{0.49\linewidth}
        \centering
        \resizebox{1.05\linewidth}{!}{
        \begin{tikzpicture}
            \begin{axis}[    
                xlabel={\LARGE $\lambda$},    
                ylabel={\LARGE Accuracy (\%)},
                xlabel style={font=\normalsize},
                ylabel style={font=\normalsize},
                xmin=0, xmax=0.9,    
                ymin=25, ymax=95,    
                xtick={0, 0.1, 0.2, 0.3, 0.4, 0.5, 0.6, 0.7, 0.8, 0.9},
                ytick={25, 35, 45, 55, 65, 75, 85, 95},    
                ymajorgrids=true,    
                grid style=dashed,
                ylabel style={font=\bfseries},
                xlabel style={font=\bfseries}
            ]
            \addplot[    
                color=purple,    
                mark=diamond,
                line width=1.5pt,
                ]
                coordinates {(0, 43.66)(0.1, 30.31)(0.2, 29.02)(0.3, 29.02)(0.4, 29.02)(0.5, 30.21)(0.6, 30.21)(0.7, 29.9)(0.8, 30.47)(0.9, 29.9)
                };
            
            \addplot[    
                color=orange,    
                mark=star,
                line width=1.5pt,
                ]
                coordinates {(0, 32.36)(0.1, 41.36)(0.2, 41.36)(0.3, 41.36)(0.4, 41.36)(0.5, 41.36)(0.6, 41.36)(0.7, 41.36)(0.8, 41.36)(0.9, 42.54)
                };
            
            \addplot[    
                color=cyan!60!black,    
                mark=pentagon,
                line width=1.5pt,
                ]
                coordinates {(0, 65.9)(0.1, 91.97)(0.2, 91.97)(0.3, 91.97)(0.4, 91.97)(0.5, 91.97)(0.6, 92.3)(0.7, 91.97)(0.8, 91.97)(0.9, 92.3)
                };
            \end{axis}
        \end{tikzpicture}
        }
        \caption*{(b) Heterophilic Graphs}
    \end{minipage}
    
    \caption{Sensitivity analysis of parameter $\lambda$ on AD-GCN.}
    \label{fig:lambda_performance}
\end{figure}

\vspace{-1.0em}

For all homophilic graphs (Citeseer, Pubmed, and DBLP), the best results are achieved with $\lambda = 0$. This is primarily because setting $\lambda = 0$ guarantees that every node is aggregated at least once. According to Corollary \ref{cor:strong_homophily}, aggregation under strong homophily is always beneficial, regardless of the degree. 

Heterophilic graphs (Chameleon, Film, and Texas) exhibit some interesting deviations. Chameleon performs better when nodes have at least one aggregation layer (i.e., when $\lambda = 0$). This is mainly because most nodes in the graph have moderate to high degrees, which makes aggregation beneficial (as noted in Corollary \ref{cor:strong_heterophily}). In contrast, the majority of nodes in Texas and Film have lower degrees (1-2). According to Corollary \ref{cor:strong_heterophily}, under strong heterophily, low-degree nodes experience signal cancellation (i.e. $\alpha_v = 0$) during the aggregation process. Therefore, setting $\lambda > 0$ (meaning some nodes do not aggregate at all) would improve performance.

\begin{figure}[ht]
    \centering   

    \begin{subfigure}{0.15\textwidth}
        \centering
        \includegraphics[width=\linewidth]{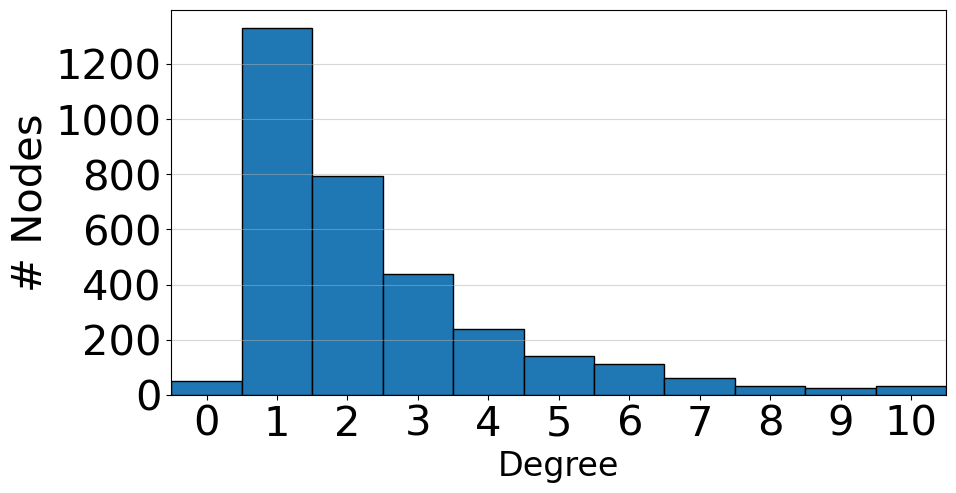}
        \caption{Citeseer}
    \end{subfigure}
    \hfill
    \begin{subfigure}{0.15\textwidth}
        \centering
        \includegraphics[width=\linewidth]{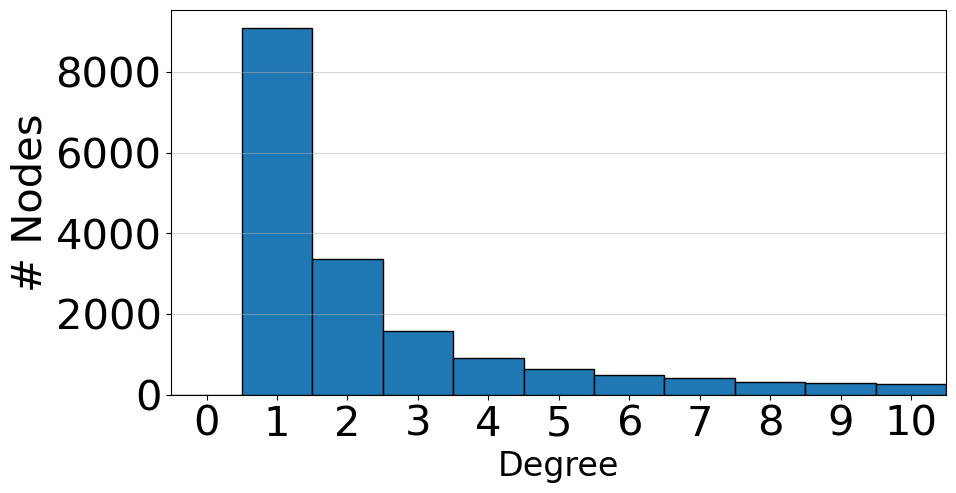}
        \caption{Pubmed}
    \end{subfigure}
    \hfill
    \begin{subfigure}{0.15\textwidth}
        \centering
        \includegraphics[width=\linewidth]{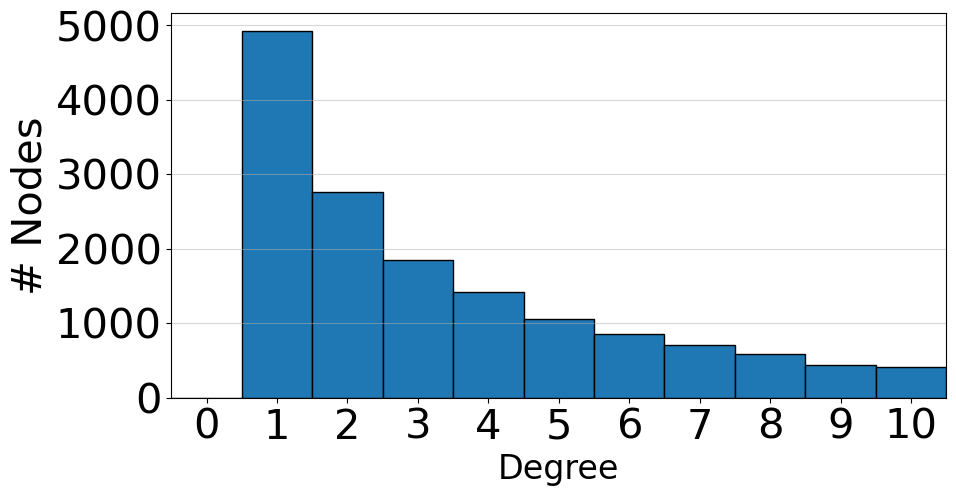}
        \caption{DBLP}
    \end{subfigure}

    \par\medskip
    \begin{subfigure}{0.15\textwidth}
        \centering
        \includegraphics[width=\linewidth]{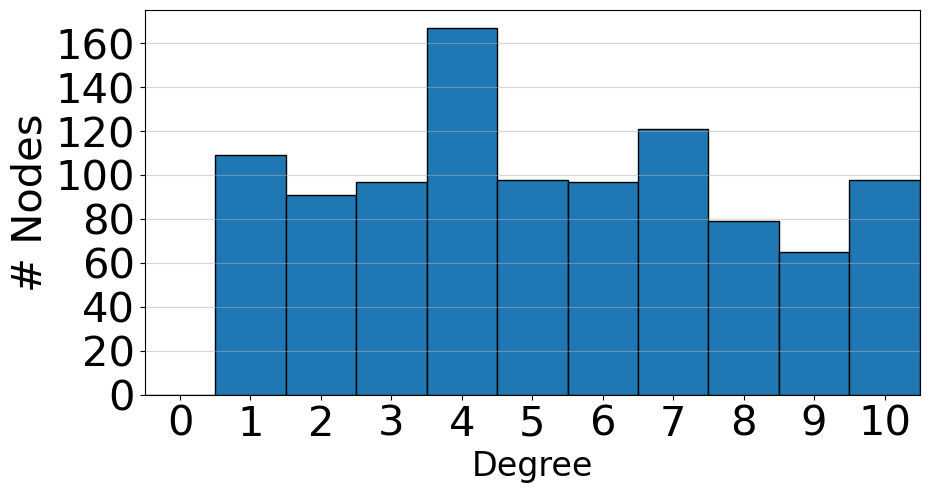}
        \caption{Chameleon}
    \end{subfigure}
    \hfill
    \begin{subfigure}{0.15\textwidth}
        \centering
        \includegraphics[width=\linewidth]{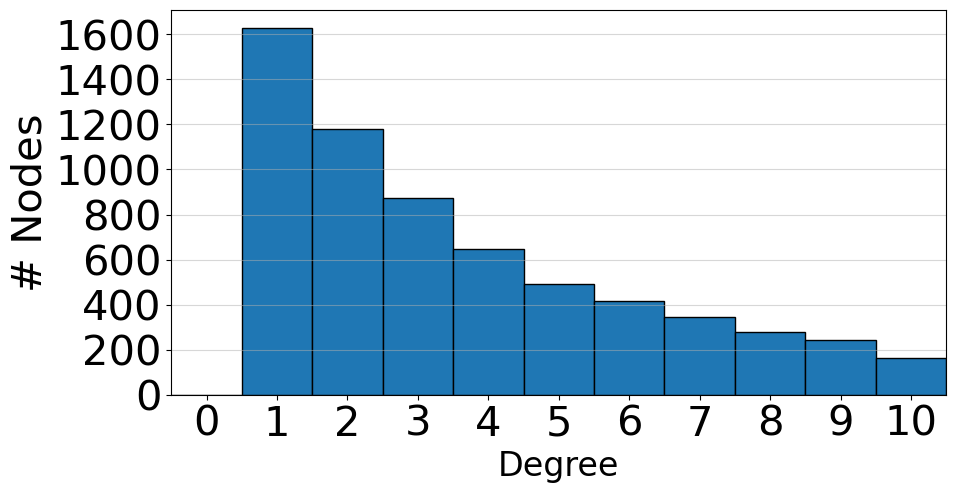}
        \caption{Film}
    \end{subfigure}
    \hfill
    \begin{subfigure}{0.15\textwidth}
        \centering
        \includegraphics[width=\linewidth]{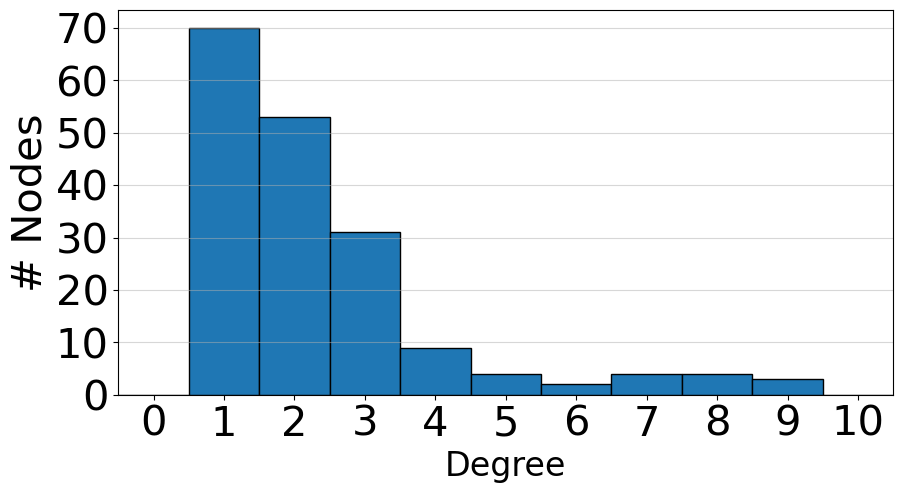}
        \caption{Texas}
    \end{subfigure}

    \caption{Degree distribution of datasets.}
    \label{fig:degree_distribution}
\end{figure}


\paragraph{Exp-5. Scalability Analysis} We assess the scalability of AD-GNN using a large dataset (ogbn-arxiv) with varying depths. The hidden layer size is fixed at 128 for all methods. Each model is evaluated based on the number of learnable parameters, average runtime per epoch, and accuracy. The results are depicted in Table \ref{tab:scalability_comparison}. Both AD-GCN variants exhibit a reasonable increase in computational complexity compared to the base model, while providing performance enhancements. Specifically, AD-GCN$_{\text{fast}}$ incurs minimal computational overhead, with a parameter count similar to that of GCN and a runtime increase of only around 5\%. 

\begin{table}[H]
\centering
\resizebox{\columnwidth}{!}{%
\begin{tabular}{|l|c|c|c|c|c|c|}
\hline
 & \multicolumn{3}{c|}{Depth = 4} & \multicolumn{3}{c|}{Depth = 8} \\
\hline
 & \# Param (K) & Time (ms) & Acc. (\%) & \# Param (K) & Time (ms) & Acc. (\%) \\
\hline
GCN & 55.5 & 277.13 & 69.53 & 122.5 & 564.73 & 68.39 \\
\hline
AD-GCN & 88.5 & 405.85 & 70.32 & 155.6 & 655.52 & 70.63 \\
\hline
AD-GCN$_{\text{fast}}$ & 55.5 & 286.40 & 70.18 & 122.5 & 580.15 & 70.42 \\
\hline
\end{tabular}%
}
\caption{Performance comparison of AD-GCN variants.}
\label{tab:scalability_comparison}
\end{table}


\paragraph{Additional Experiments} The Appendix includes four experiments: (1) comparing AD-GNN with additional state-of-the-art GNNs on node classification, (2) evaluating AD-GNN with a modified depth benefit metric from our extended analysis, (3) visualizing how the depth benefit metric varies with node degrees and AD-GNN's cluster embedding quality, and (4) providing empirical justifications for employing degree assortativity in AD-GNN$_\text{fast}$.




\section{Conclusion, and Future Work}

In this work, we provide novel theoretical insights into the impact of layer depth on the classification quality of nodes under varying homophily for node classification. Our findings indicate that strong heterophily is not necessarily negative, and the aggregation requirements of a node depend on the distribution of same-label neighbours, opposite-label neighbours, and total degree within its neighbourhood. By applying our theoretical insights, we develop a novel GNN plugin that adaptively determines the layer depth for each node, thereby improving their classification quality. Experiments on diverse graph structures and multiple GNN backbones demonstrate that our solution can consistently uplift performance in the node classification task.

Limitations of our theoretical analysis are provided in the Appendix. We plan to offer a more detailed theoretical analysis with relaxed assumptions in our future work.  Furthermore, we plan to learn the hyperparameter $\lambda$ in a data-driven manner.

\bibliography{aaai2026}
\clearpage
\section{Appendix}

\section{Analysis of our Theoretical Framework}

While our theoretical framework offers a new perspective on GNN performance under varying homophily conditions, our analysis has several limitations. In this section, we discuss these limitations.

\paragraph{Simplified Aggregation Analysis}

 We primarily focus on the uniform averaging operation, omitting several key components of real GCN architectures, including a symmetric normalisation scheme, weight transformation and non-linear activation functions. This is done to make our analysis more tractable. Our experimental results demonstrate that the insights derived under these relaxations are applicable to real-world settings. Nonetheless, incorporating these key components would strengthen the generalizability of our findings.

 \paragraph{Layer Independence Assumption}  Theorem \ref{thm:multilayeraggr} assumes independence between features across layers, which might not hold due to feature correlations and oversmoothing. We have provided an extended theoretical analysis that incorporates these real-world factors and derives a more realistic depth benefit metric. Nonetheless, this assumption becomes more reasonable for shallow  networks (i.e., 2-3 layers), which are generally optimal for node classification tasks.  In such cases, the independence assumption serves as a useful first-order approximation. 

 \paragraph{Contextual Stochastic Block Model Structure} Our analysis assumes a CSBM structure, which allows us to maintain Gaussian feature distributions and clear class separations. This aligns with existing theoretical frameworks built for GNNs that prioritise analytical tractability. However, it would be worthwhile to incorporate more realistic and complex graph structures to further strengthen the generalizability of our theoretical insights.

 \paragraph{Binary Classification Restriction} Our current theoretical analysis focuses on binary classification, which provides analytical feasibility. Our experimental results strongly validate the fact that our findings hold in multi-class settings. This transferability can be justified by several reasons. 
 
\begin{itemize}
    \item Regardless of distinguishing between two or $k$ classes, nodes still benefit from same-label neighbours and are negatively impacted by opposite-label neighbours, making the high-level concept of the homophily-heterophily tradeoff identical.
    
    \item The presented tradeoff between signal preservation and noise reduction scales naturally to multi-class scenarios. In signal preservation, same-label neighbours reinforce the correct class prototype, while opposite-label neighbours (regardless of their specific labels) hinder the signal. The principle of reducing noise through neighbourhood averaging remains consistent, irrespective of the number of classes.
    
    \item Our key insight about balanced neighbourhoods causing signal cancellation extends naturally to a multi-class setting. Nodes with equal proportions of different label neighbours experience diluted class-specific signals, whether those different classes represent a single alternative (binary) or multiple alternatives (multi-class).
\end{itemize}

In our future work, we plan to formally extend our theoretical analysis to support a multi-class setting.

\paragraph{Architecture Generalizability} Our current theoretical analysis is based on GCN with uniform neighbour aggregation. However, our experimental analysis incorporates multiple GNN backbones with diverse architectures, demonstrating that our insights also hold for these. Extending our theoretical analysis to more complex GNN architectures would further strengthen the generalizability of our insights.

\section{Proofs}

In this section, we provide proofs for the theorems in the main content. 

\labelaggregation*

\begin{proof}
We start by proving the Expected Representation. Starting from the aggregation formula:
\[
\mathbf{h}_v = \frac{1}{d_v + 1}\left(\mathbf{x}_v + \sum_{u \in \mathcal{N}^+_v} \mathbf{x}_u + \sum_{u \in \mathcal{N}^-_v} \mathbf{x}_u\right)
\]
Taking expectations conditioned on the node's label $y_v$:
\begin{align*}
\mathbb{E}[\mathbf{h}_v | y_v] &= \frac{1}{d_v + 1}\Big(\mathbb{E}[\mathbf{x}_v | y_v] 
 + \sum_{u \in \mathcal{N}^+_v} \mathbb{E}[\mathbf{x}_u | y_v] \\
&\quad + \sum_{u \in \mathcal{N}^-_v} \mathbb{E}[\mathbf{x}_u | y_v]\Big)
\end{align*}
By the feature assumptions, $\mathbb{E}[\mathbf{x}_v | y_v] = \boldsymbol{\mu}^{y_v}$, $\mathbb{E}[\mathbf{x}_u | y_v] = \boldsymbol{\mu}^{y_v}$ for $u \in \mathcal{N}^+_v$, and $\mathbb{E}[\mathbf{x}_u | y_v] = \boldsymbol{\mu}^{1-y_v}$ for $u \in \mathcal{N}^-_v$. Substituting:
\begin{align*}
\mathbb{E}[\mathbf{h}_v | y_v] &= \frac{1}{d_v + 1}\Big(\boldsymbol{\mu}^{y_v} + d^+_v \cdot \boldsymbol{\mu}^{y_v} + d^-_v \cdot \boldsymbol{\mu}^{1-y_v}\Big) \\
&= \frac{1 + d^+_v}{d_v + 1}\boldsymbol{\mu}^{y_v} + \frac{d^-_v}{d_v + 1}\boldsymbol{\mu}^{1-y_v}
\end{align*}

Then, we prove the Signal Variance. For class 0 nodes ($y_v = 0$):
\[
\mathbb{E}[\mathbf{h}_v | y_v = 0] = \frac{1 + d^+_v}{d_v + 1}\boldsymbol{\mu}^0 + \frac{d^-_v}{d_v + 1}\boldsymbol{\mu}^1
\]
For class 1 nodes ($y_v = 1$):
\[
\mathbb{E}[\mathbf{h}_v | y_v = 1] = \frac{1 + d^+_v}{d_v + 1}\boldsymbol{\mu}^1 + \frac{d^-_v}{d_v + 1}\boldsymbol{\mu}^0
\]
Computing the difference:
\begin{align*}
&\mathbb{E}[\mathbf{h}_v | y_v = 0] - \mathbb{E}[\mathbf{h}_v | y_v = 1] \\
&= \frac{1 + d^+_v}{d_v + 1}\boldsymbol{\mu}^0 + \frac{d^-_v}{d_v + 1}\boldsymbol{\mu}^1 - \frac{1 + d^+_v}{d_v + 1}\boldsymbol{\mu}^1 - \frac{d^-_v}{d_v + 1}\boldsymbol{\mu}^0 \\
&= \frac{1 + d^+_v - d^-_v}{d_v + 1}(\boldsymbol{\mu}^0 - \boldsymbol{\mu}^1)
\end{align*}
Taking the squared norm:
\begin{align*}
&\|\mathbb{E}[\mathbf{h}_v | y_v = 0] - \mathbb{E}[\mathbf{h}_v | y_v = 1]\|^2 \\
&= \left(\frac{1 + d^+_v - d^-_v}{d_v + 1}\right)^2 \|\boldsymbol{\mu}^0 - \boldsymbol{\mu}^1\|^2 \\
&= \alpha_v^2 \Delta^2
\end{align*}

Then, we prove the Noise Variance. The noise variance is:
\begin{align*}
\text{Var}[\mathbf{h}_v | y_v] &= \text{Var}\left[\frac{1}{d_v + 1}\Big(\boldsymbol{\epsilon}_v + \sum_{u \in \mathcal{N}^+_v} \boldsymbol{\epsilon}^+_u + \sum_{u \in \mathcal{N}^-_v} \boldsymbol{\epsilon}^-_u\Big)\right]
\end{align*}
Since there are $1 + d^+_v + d^-_v = d_v + 1$ independent noise terms, each with variance $\sigma^2_{\text{intra}}$:
\begin{align*}
\text{Var}[\mathbf{h}_v | y_v] &= \frac{1}{(d_v + 1)^2} \cdot (d_v + 1) \sigma^2_{\text{intra}} \\
&= \frac{\sigma^2_{\text{intra}}}{d_v + 1}
\end{align*}

Finally, we derive the Classification Quality. By definition, classification quality is the ratio of signal variance to noise variance:
\begin{align*}
Q_v &= \frac{\alpha_v^2 \Delta^2}{\sigma^2_{\text{intra}}/(d_v + 1)} \\
&= \frac{\alpha_v^2 (d_v + 1) \Delta^2}{\sigma^2_{\text{intra}}}
\end{align*}
This completes the proof.
\end{proof}


\multilayeraggr*

\begin{proof}
We proceed by induction on the number of layers $n$.

First, consider the base case ($n = 1$). From Theorem \ref{thm:labelaggregation}, for a single aggregation layer:
\begin{align*}
\mathbb{E}[\mathbf{h}^{(1)}_v | y_v] &= \frac{1 + d^+_v}{d_v + 1}\boldsymbol{\mu}^{y_v} + \frac{d^-_v}{d_v + 1}\boldsymbol{\mu}^{1-y_v}\\
\text{Var}[\mathbf{h}^{(1)}_v | y_v] &= \frac{\sigma^2_{\text{intra}}}{d_v + 1}
\end{align*}
The signal variance is:
\begin{align*}
&\|\mathbb{E}[\mathbf{h}^{(1)}_v | y_v = 0] - \mathbb{E}[\mathbf{h}^{(1)}_v | y_v = 1]\|^2 \\
&= \left(\frac{1 + d^+_v - d^-_v}{d_v + 1}\right)^2 \Delta^2 = \alpha_v^2 \Delta^2
\end{align*}
And classification quality:
\[Q_v^{(1)} = \frac{\alpha_v^2 \Delta^2}{\sigma^2_{\text{intra}}/(d_v + 1)} = \frac{\alpha_v^2 (d_v + 1) \Delta^2}{\sigma^2_{\text{intra}}}\]

Assume that after $k$ layers:
\begin{align*}
\|\mathbb{E}[\mathbf{h}^{(k)}_v | y_v = 0] - \mathbb{E}[\mathbf{h}^{(k)}_v | y_v = 1]\|^2 &= \alpha_v^{2k} \Delta^2\\
\text{Var}[\mathbf{h}^{(k)}_v | y_v] &= \frac{\sigma^2_{\text{intra}}}{(d_v + 1)^k}\\
Q_v^{(k)} &= \frac{\alpha_v^{2k} (d_v + 1)^k \Delta^2}{\sigma^2_{\text{intra}}}
\end{align*}

Consider the step ($k \to k+1$): At layer $k+1$, we apply the same aggregation operation to the outputs of layer $k$. Let $\mathbf{r}^{(k)}_0 = \mathbb{E}[\mathbf{h}^{(k)}_v | y_v = 0]$ and $\mathbf{r}^{(k)}_1 = \mathbb{E}[\mathbf{h}^{(k)}_v | y_v = 1]$. By hypothesis:
\[\|\mathbf{r}^{(k)}_0 - \mathbf{r}^{(k)}_1\|^2 = \alpha_v^{2k} \Delta^2\]

Applying one more aggregation layer:
\begin{align*}
\mathbb{E}[\mathbf{h}^{(k+1)}_v | y_v = 0] 
&= \frac{1}{d_v + 1}\left(\mathbf{r}^{(k)}_0 + \sum_{u \in \mathcal{N}^+_v} \mathbf{r}^{(k)}_0 + \sum_{u \in \mathcal{N}^-_v} \mathbf{r}^{(k)}_1\right)\\
&= \frac{1 + d^+_v}{d_v + 1}\mathbf{r}^{(k)}_0 + \frac{d^-_v}{d_v + 1}\mathbf{r}^{(k)}_1
\end{align*}

Similarly:
\begin{align*}
\mathbb{E}[\mathbf{h}^{(k+1)}_v | y_v = 1] &= \frac{1 + d^+_v}{d_v + 1}\mathbf{r}^{(k)}_1 + \frac{d^-_v}{d_v + 1}\mathbf{r}^{(k)}_0
\end{align*}

The signal difference becomes:
\begin{align*}
&\mathbb{E}[\mathbf{h}^{(k+1)}_v | y_v = 0] - \mathbb{E}[\mathbf{h}^{(k+1)}_v | y_v = 1] \\
&= \frac{1 + d^+_v - d^-_v}{d_v + 1}(\mathbf{r}^{(k)}_0 - \mathbf{r}^{(k)}_1)\\
&= \alpha_v (\mathbf{r}^{(k)}_0 - \mathbf{r}^{(k)}_1)
\end{align*}

Taking the squared norm:
\begin{align*}
&\|\mathbb{E}[\mathbf{h}^{(k+1)}_v | y_v = 0] - \mathbb{E}[\mathbf{h}^{(k+1)}_v | y_v = 1]\|^2 \\
&= \alpha_v^2 \|\mathbf{r}^{(k)}_0 - \mathbf{r}^{(k)}_1\|^2 \\
&= \alpha_v^2 \cdot \alpha_v^{2k} \Delta^2 = \alpha_v^{2(k+1)} \Delta^2
\end{align*}

For the noise variance, each aggregation layer independently reduces variance by factor $(d_v + 1)$:
\begin{align*}
\text{Var}[\mathbf{h}^{(k+1)}_v | y_v] &= \frac{1}{d_v + 1} \text{Var}[\mathbf{h}^{(k)}_v | y_v] \\
&= \frac{1}{d_v + 1} \cdot \frac{\sigma^2_{\text{intra}}}{(d_v + 1)^k} \\
&= \frac{\sigma^2_{\text{intra}}}{(d_v + 1)^{k+1}}
\end{align*}

Finally, the classification quality becomes:
\begin{align*}
Q_v^{(k+1)} &= \frac{\alpha_v^{2(k+1)} \Delta^2}{\sigma^2_{\text{intra}}/(d_v + 1)^{k+1}} \\
&= \frac{\alpha_v^{2(k+1)} (d_v + 1)^{k+1} \Delta^2}{\sigma^2_{\text{intra}}}
\end{align*}

This completes the induction and proves all three statements.
\end{proof}


\begin{table*}[htbp]
\centering
\begin{tabular}{lcccccc}
\hline
Dataset & Homophily Ratio & \#Nodes & \#Edges & \#Classes & \#Features \\
\hline
Cora ML & 0.79 & 2,995 & 16,316 & 7 & 2,879 \\
Citeseer & 0.71 & 3,327 & 4,732 & 6 & 3,703 \\
Pubmed & 0.79 & 19,717 & 44,338 & 3 & 500 \\
Photo & 0.83 & 7,650 & 119,081 & 8 & 745 \\
DBLP & 0.83 &  17,716 & 105,734 & 4 & 1,639 \\
\hline
Film & 0.24 & 7,600 & 33,544 & 5 & 931 \\
Squirrel & 0.22 & 5,201 & 217,073 & 5 & 2,089 \\
Chameleon & 0.25 & 2,277 & 36,101 & 5 & 2,325 \\
Cornell & 0.11 & 183 & 295 & 5 & 1,703 \\
Wisconsin & 0.16 & 251 & 499 & 5 & 1,703 \\
Texas & 0.06 & 183 & 309 & 5 & 1,703 \\
\hline
ogbn-arxiv & 0.65 & 169,343 & 1,166,243 & 40 & 128 \\
\hline
\end{tabular}
\caption{Dataset statistics.}
\label{tab:dataset_stats}
\end{table*}

\section{Extended Theoretical Analysis} 

\subsection{Multi-Layer Analysis with Feature Correlation and  Oversmoothing}\label{sec:extended_theoretical_analysis}

While our analysis in Theorem \ref{thm:multilayeraggr} assumes feature independence across layers, real GNN behaviour exhibits deviations due to feature correlation and over-smoothing effects. These deviations can be characterised by node-specific constants that we define below.

\begin{definition}[Signal Calibration Factor]
For node $v$ at layer $n$, the signal calibration factor $\beta_{v,n}$ represents the multiplicative deviation from idealized signal preservation at that layer:
\begin{equation*}
\beta_{v,n} = \frac{\|\mathbb{E}[\mathbf{h}^{(n)}_v | y_v = 0] - \mathbb{E}[\mathbf{h}^{(n)}_v | y_v = 1]\|^2}{ \alpha_v^2 \|\mathbb{E}[\mathbf{h}^{(n-1)}_v | y_v = 0] - \mathbb{E}[\mathbf{h}^{(n-1)}_v | y_v = 1]\|^2}
\end{equation*}

For simplicity, when $\beta_{v,n}$ is approximately constant across layers, we denote it as $\beta_v$.
\end{definition}

\begin{definition}[Noise Evolution Factor]
For node $v$ at layer $n$, the noise evolution factor $\gamma_{v,n}$ captures the deviation from degree-based noise reduction at that layer:
\begin{equation*}
\gamma_{v,n} = \frac{(d_v+1)\text{Var}[\mathbf{h}^{(n)}_v | y_v]}{\text{Var}[\mathbf{h}^{(n-1)}_v | y_v]}
\end{equation*}

For simplicity, when $\gamma_{v,n}$ is approximately constant across layers, we denote it as $\gamma_v$.
\end{definition}

From the above, we derive the modified theorem for multi-layer analysis below:

\begin{restatable}[Multi-Layer Analysis with Calibration Factors]{theorem}{multilayercalibration}
Consider an $n$-layer GNN with signal preservation factor $\alpha_v = \frac{1 + d^+_v - d^-_v}{d_v + 1}$. Let $\beta_v$ and $\gamma_v$ be the signal calibration and noise evolution factors respectively, representing systematic deviations from idealized aggregation behavior. Then after $n$ layers:

\begin{itemize}
    \item Signal variance: $\|\mathbb{E}[\mathbf{h}^{(n)}_v | y_v = 0] - \mathbb{E}[\mathbf{h}^{(n)}_v | y_v = 1]\|^2 = \beta_v^n \alpha_v^{2n} \Delta^2$
    
    \item Noise variance: $\text{Var}[\mathbf{h}^{(n)}_v | y_v] = \frac{\gamma_v^n \sigma^2_{\text{intra}}}{(d_v + 1)^n}$
    
    \item Classification quality: $Q_v^{(n)} = \frac{\beta_v^n \alpha_v^{2n} (d_v + 1)^n \Delta^2}{\gamma_v^n \sigma^2_{\text{intra}}} = \left(\frac{\beta_v \alpha_v^2 (d_v + 1)}{\gamma_v}\right)^n \frac{\Delta^2}{\sigma^2_{\text{intra}}}$
\end{itemize}

where $\frac{\Delta^2}{\sigma^2_{\text{intra}}}$ is the initial signal-to-noise ratio before aggregation.
\end{restatable}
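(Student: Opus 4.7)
The plan is to mirror the inductive structure of the proof of Theorem \ref{thm:multilayeraggr}, but with the per-layer ratios $\beta_v$ and $\gamma_v$ absorbing the deviations from idealized aggregation. Because the calibration factors are defined precisely as layerwise ratios, the theorem is essentially a telescoping argument: once I solve the one-step recursion, taking the product across $n$ layers yields the claimed exponentiated form. The initial signal-to-noise baseline $\Delta^2 / \sigma^2_{\text{intra}}$ emerges naturally from the $n = 0$ state.

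First I would handle the base case $n = 1$. Noting that $\|\mathbb{E}[\mathbf{h}^{(0)}_v \mid y_v = 0] - \mathbb{E}[\mathbf{h}^{(0)}_v \mid y_v = 1]\|^2 = \Delta^2$ and $\text{Var}[\mathbf{h}^{(0)}_v \mid y_v] = \sigma^2_{\text{intra}}$ by definition of the raw features, the definition of $\beta_{v,1}$ gives
\[
\|\mathbb{E}[\mathbf{h}^{(1)}_v \mid y_v = 0] - \mathbb{E}[\mathbf{h}^{(1)}_v \mid y_v = 1]\|^2 = \beta_v \alpha_v^2 \Delta^2,
\]
and the definition of $\gamma_{v,1}$ gives $\text{Var}[\mathbf{h}^{(1)}_v \mid y_v] = \gamma_v \sigma^2_{\text{intra}}/(d_v+1)$. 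These match the claimed forms with $n = 1$, using the constancy approximation $\beta_{v,n} \approx \beta_v$, $\gamma_{v,n} \approx \gamma_v$.

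For the inductive step, assume the signal and noise identities hold at layer $k$. Rearranging the definition of $\beta_{v,k+1}$ yields
\[
\|\mathbb{E}[\mathbf{h}^{(k+1)}_v \mid y_v = 0] - \mathbb{E}[\mathbf{h}^{(k+1)}_v \mid y_v = 1]\|^2 = \beta_v \alpha_v^2 \bigl(\beta_v^k \alpha_v^{2k} \Delta^2\bigr) = \beta_v^{k+1} \alpha_v^{2(k+1)} \Delta^2,
\]
and rearranging the definition of $\gamma_{v,k+1}$ yields
\[
\text{Var}[\mathbf{h}^{(k+1)}_v \mid y_v] = \frac{\gamma_v}{d_v + 1} \cdot \frac{\gamma_v^k \sigma^2_{\text{intra}}}{(d_v+1)^k} = \frac{\gamma_v^{k+1} \sigma^2_{\text{intra}}}{(d_v+1)^{k+1}}.
\]
The classification quality claim then follows immediately by dividing the two expressions and grouping the per-layer multiplicative contribution into $\beta_v \alpha_v^2 (d_v+1)/\gamma_v$ raised to the $n$.

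The main obstacle will not be the algebra, which is essentially forced by the ratio-based definitions, but rather justifying the passage from the layer-indexed factors $\beta_{v,n}, \gamma_{v,n}$ to the layer-independent constants $\beta_v, \gamma_v$. This step relies on the implicit stationarity assumption stated in the definitions, and I would flag it explicitly: the theorem as written treats $\beta_v, \gamma_v$ as effective layerwise averages, so the identities should be interpreted as approximations whose accuracy depends on how slowly $\beta_{v,n}$ and $\gamma_{v,n}$ drift with depth. If instead one keeps them layer-indexed, the proof goes through verbatim with $\beta_v^n$ and $\gamma_v^n$ replaced by the products $\prod_{i=1}^{n} \beta_{v,i}$ and $\prod_{i=1}^{n} \gamma_{v,i}$, giving an exact (though less interpretable) version of the result.
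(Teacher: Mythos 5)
Your proposal is correct and follows essentially the same route as the paper's proof: induction on the layer index, with the base case fixed by the one-layer behaviour and the inductive step telescoping the per-layer ratios $\beta_v$ and $\gamma_v$ into the exponentiated form. If anything, your version is slightly cleaner, since you derive each step directly from the ratio definitions of $\beta_{v,n}$ and $\gamma_{v,n}$ and explicitly flag the constancy assumption (with the exact product form $\prod_{i=1}^{n}\beta_{v,i}$, $\prod_{i=1}^{n}\gamma_{v,i}$ as the fallback), whereas the paper introduces the calibration factors into the base case by assertion.
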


\begin{proof}
We proceed by induction on the number of layers $n$, incorporating the theoretical calibration factors.

First, we consider the base case ($n = 1$). From Theorem \ref{thm:labelaggregation}, the single-layer behavior gives:
\begin{align*}
\|\mathbb{E}[\mathbf{h}^{(1)}_v | y_v = 0] - \mathbb{E}[\mathbf{h}^{(1)}_v | y_v = 1]\|^2 &= \alpha_v^2 \Delta^2\\
\text{Var}[\mathbf{h}^{(1)}_v | y_v] &= \frac{\sigma^2_{\text{intra}}}{d_v + 1}
\end{align*}

However, real GNN behavior exhibits systematic deviations captured by the calibration factors:
\begin{align*}
\|\mathbb{E}[\mathbf{h}^{(1)}_v | y_v = 0] - \mathbb{E}[\mathbf{h}^{(1)}_v | y_v = 1]\|^2 &= \beta_v \alpha_v^2 \Delta^2\\
\text{Var}[\mathbf{h}^{(1)}_v | y_v] &= \frac{\gamma_v \sigma^2_{\text{intra}}}{d_v + 1}
\end{align*}

The classification quality becomes:
\[Q_v^{(1)} = \frac{\beta_v \alpha_v^2 \Delta^2}{\gamma_v \sigma^2_{\text{intra}}/(d_v + 1)} = \frac{\beta_v \alpha_v^2 (d_v + 1) \Delta^2}{\gamma_v \sigma^2_{\text{intra}}}\]

This establishes the base case. Assume that after $k$ layers:
\begin{align*}
\|\mathbb{E}[\mathbf{h}^{(k)}_v | y_v = 0] - \mathbb{E}[\mathbf{h}^{(k)}_v | y_v = 1]\|^2 &= \beta_v^k \alpha_v^{2k} \Delta^2
\end{align*}
\begin{align*}
\text{Var}[\mathbf{h}^{(k)}_v | y_v] &= \frac{\gamma_v^k \sigma^2_{\text{intra}}}{(d_v + 1)^k}
\end{align*}
\begin{align*}
Q_v^{(k)} &= \left(\frac{\beta_v \alpha_v^2 (d_v + 1)}{\gamma_v}\right)^k \frac{\Delta^2}{\sigma^2_{\text{intra}}}
\end{align*}

Consider the step ($k \to k+1$). At layer $k+1$, we apply aggregation to the outputs of layer $k$. Let $\mathbf{r}^{(k)}_0 = \mathbb{E}[\mathbf{h}^{(k)}_v | y_v = 0]$ and $\mathbf{r}^{(k)}_1 = \mathbb{E}[\mathbf{h}^{(k)}_v | y_v = 1]$. By the inductive hypothesis:
\[\|\mathbf{r}^{(k)}_0 - \mathbf{r}^{(k)}_1\|^2 = \beta_v^k \alpha_v^{2k} \Delta^2\]

Applying one more aggregation layer with calibration factor:
\begin{align*}
\mathbb{E}[\mathbf{h}^{(k+1)}_v | y_v = 0] &= \frac{1 + d^+_v}{d_v + 1}\mathbf{r}^{(k)}_0 + \frac{d^-_v}{d_v + 1}\mathbf{r}^{(k)}_1\\
\mathbb{E}[\mathbf{h}^{(k+1)}_v | y_v = 1] &= \frac{1 + d^+_v}{d_v + 1}\mathbf{r}^{(k)}_1 + \frac{d^-_v}{d_v + 1}\mathbf{r}^{(k)}_0
\end{align*}

The signal difference becomes:
\begin{align*}
\mathbb{E}[\mathbf{h}^{(k+1)}_v | y_v = 0] - \mathbb{E}[\mathbf{h}^{(k+1)}_v | y_v = 1] 
\end{align*}
\begin{align*}
&= \frac{1 + d^+_v - d^-_v}{d_v + 1}(\mathbf{r}^{(k)}_0 - \mathbf{r}^{(k)}_1)\\
&= \alpha_v (\mathbf{r}^{(k)}_0 - \mathbf{r}^{(k)}_1)
\end{align*}

Incorporating the signal calibration factor for this layer:
\begin{align*}
&\|\mathbb{E}[\mathbf{h}^{(k+1)}_v | y_v = 0] - \mathbb{E}[\mathbf{h}^{(k+1)}_v | y_v = 1]\|^2 \\
&= \beta_v \alpha_v^2 \|\mathbf{r}^{(k)}_0 - \mathbf{r}^{(k)}_1\|^2 \\
&= \beta_v \alpha_v^2 \cdot \beta_v^k \alpha_v^{2k} \Delta^2 = \beta_v^{k+1} \alpha_v^{2(k+1)} \Delta^2
\end{align*}

For the noise variance, each layer introduces calibration factor $\gamma_v$:
\begin{align*}
\text{Var}[\mathbf{h}^{(k+1)}_v | y_v] &= \frac{\gamma_v}{d_v + 1} \text{Var}[\mathbf{h}^{(k)}_v | y_v] \\
&= \frac{\gamma_v}{d_v + 1} \cdot \frac{\gamma_v^k \sigma^2_{\text{intra}}}{(d_v + 1)^k} \\
&= \frac{\gamma_v^{k+1} \sigma^2_{\text{intra}}}{(d_v + 1)^{k+1}}
\end{align*}

The classification quality becomes:
\begin{align*}
Q_v^{(k+1)} &= \frac{\beta_v^{k+1} \alpha_v^{2(k+1)} \Delta^2}{\gamma_v^{k+1} \sigma^2_{\text{intra}}/(d_v + 1)^{k+1}} \\
&= \frac{\beta_v^{k+1} \alpha_v^{2(k+1)} (d_v + 1)^{k+1} \Delta^2}{\gamma_v^{k+1} \sigma^2_{\text{intra}}} \\
&= \left(\frac{\beta_v \alpha_v^2 (d_v + 1)}{\gamma_v}\right)^{k+1} \frac{\Delta^2}{\sigma^2_{\text{intra}}}
\end{align*}

This completes the induction and proves all three statements.
\end{proof}

Next, we extend the proposed depth benefit metric to incorporate the calibration phenomena.



\begin{definition}[Modified Depth Benefit Metric]
For node $v$ and target depth $n$, incorporating calibration factors, the Depth Benefit Metric is:
\begin{equation*}
\varepsilon_v^{n} = \left(\frac{\beta_v \alpha_v^2 (d_v + 1)}{\gamma_v}\right)^n
\end{equation*}
where $\beta_v$ is the signal calibration factor and $\gamma_v$ is the noise evolution factor for node $v$.
\end{definition}


\begin{table*}[!t]
\centering
\renewcommand\arraystretch{1.1}
\scalebox{0.825}{\begin{tabular}{c| c c | c c c c}
\specialrule{.1em}{.05em}{.05em} 
Methods & Citeseer & Pubmed & Film & Cornell & Wisconsin & Texas \\ 
\toprule
{GCN} & {76.68 \sd{1.64}} & {87.38 \sd{0.66}} & {30.26 \sd{0.79}} & {57.03 \sd{4.67}} & {59.80 \sd{6.99}} & {59.46 \sd{5.25}} \\
{GAT} & {75.46 \sd{1.72}} & {87.62 \sd{0.42}} & {26.28 \sd{1.73}} & {58.92 \sd{3.32}} & {55.29 \sd{8.71}} & {58.38 \sd{4.45}} \\
{GraphSage} & {76.04 \sd{1.30}} & {88.45 \sd{0.50}} & {34.23 \sd{0.99}} & {75.95 \sd{5.01}} & {81.18 \sd{5.56}} & {82.43 \sd{6.14}} \\
{SGC} & {75.66 \sd{1.37}} & {87.15 \sd{0.47}} & {25.83 \sd{1.09}} & {55.41 \sd{5.29}} & {57.84 \sd{4.83}} & {58.11 \sd{6.27}} \\
{GCN-Cheby} & {76.25 \sd{1.76}} & {88.08 \sd{0.52}} & {36.11 \sd{1.09}} & {74.32 \sd{7.46}} & {79.41 \sd{4.46}} & {77.30 \sd{4.07}} \\
{MixHop} & {70.75 \sd{2.95}} & {80.75 \sd{2.29}} & {32.22 \sd{2.34}} & {73.51 \sd{6.34}} & {75.88 \sd{4.90}} & {77.84 \sd{7.73}} \\
{H$_2$GCN} & {76.72 \sd{1.50}} & {88.50 \sd{0.64}} & {35.86 \sd{1.03}} & {82.16 \sd{4.80}} & {86.67 \sd{4.69}} & {84.86 \sd{6.77}} \\
{ACM-GCN} & {75.56 \sd{1.32}} & {89.48 \sd{0.58}} & {35.09 \sd{1.18}} & {77.57 \sd{5.26}} & {83.53 \sd{3.83}} & {82.70 \sd{6.27}} \\
{GGCN} & {76.65 \sd{1.91}} & {88.25 \sd{0.43}} & {34.86 \sd{0.87}} & {71.35 \sd{7.34}} & {74.12 \sd{5.37}} & {65.14 \sd{8.30}} \\
{LINKX} & {72.00 \sd{1.90}} & {78.39 \sd{1.09}} & {27.06 \sd{1.22}} & {39.46 \sd{17.89}} & {55.88 \sd{6.29}} & {52.43 \sd{9.21}} \\
{GLOGNN} & {77.41 \sd{1.65}} & {89.62 \sd{0.35}} & {37.36 \sd{1.34}} & {82.16 \sd{5.82}} & {82.35 \sd{5.11}} & {69.19 \sd{11.16}} \\
{WRGAT} & {76.81 \sd{1.89}} & {88.52 \sd{0.92}} & {36.53 \sd{0.77}} & {81.62 \sd{3.90}} & {86.98 \sd{3.78}} & {83.62 \sd{5.50}} \\
{Dir-GNN} & {77.71 \sd{0.78}} & {86.94 \sd{0.55}} & {35.76 \sd{6.31}} & {76.51 \sd{6.14}} & {80.50 \sd{5.50}} & {76.25 \sd{4.68}} \\
{Hi-GNN} & {79.30 \sd{2.13}} & {89.43 \sd{0.53}} & {37.21 \sd{1.35}} & {80.00 \sd{4.26}} & {85.88 \sd{3.18}} & {86.22 \sd{4.67}} \\
\midrule
{AD-GCN} & {79.14 \sd{1.00}} & {88.39 \sd{0.32}} & {42.54 \sd{1.15}} & {88.51 \sd{4.87}} & {93.88 \sd{3.03}} & {92.30 \sd{4.52}}\\
{AD-MixHop} & \cellcolor{blue!15}{81.01 \sd{1.36}} & \cellcolor{blue!15}{90.09 \sd{0.58}} & \cellcolor{blue!15}{43.37 \sd{1.17}} & \cellcolor{blue!15}{{90.21 \sd{3.59}}} & \cellcolor{blue!15}{94.75 \sd{2.22}} & \cellcolor{blue!15}{94.43 \sd{2.56}} \\
\specialrule{.1em}{.05em}{.05em}
\end{tabular}}
\caption{Node classification accuracy ± standard deviation (\%). The best results are highlighted. Baseline results are sourced from \citet{suresh2021breaking}, and \citet{zheng2024learn}.}
\label{tab:baseline_comparison}
\end{table*}

\begin{table*}[!t]
\centering
\renewcommand\arraystretch{1.1}
\scalebox{0.67}{\begin{tabular}{c| c c c c c| c c c c c c}
\specialrule{.1em}{.05em}{.05em} 
Methods & Cora-ML & Citeseer & Pubmed & Photo & DBLP & Film & Squirrel & Chameleon & Cornell  &Wisconsin & Texas \\ 
\toprule
{GCN} & {87.07 \sd{1.21}} & {76.68 \sd{1.64}} & {86.74 \sd{0.47}} & { 89.30 \sd{0.82}}  & {83.93 \sd{0.34}} & {30.26 \sd{0.79}} & {39.47 \sd{1.47}} & {40.89 \sd{4.12}}\ & {55.14 \sd{8.46}} & {61.60 \sd{7.00}} & {60.00 \sd{6.45}} \\
{AD-GCN} & \cellcolor{blue!15}{{87.32 \sd{1.25}}} & {79.14 \sd{1.00}} & {88.39 \sd{0.32}} & \cellcolor{blue!15}{{94.10 \sd{0.31}}}  & \cellcolor{blue!15}{{84.14 \sd{0.44}}} & \cellcolor{blue!15}{{42.54 \sd{1.15}}} & {40.04 \sd{0.99}} & {43.66 \sd{0.73}} & {88.51 \sd{4.87}} & {93.88 \sd{3.03}} & \cellcolor{blue!15}{{92.30 \sd{4.52}}} \\
{AD-GCN$^{\#}$} & {87.00 \sd{1.95}} & \cellcolor{blue!15}{{79.58 \sd{1.38}}} & \cellcolor{blue!15}{{89.22 \sd{0.39}}} & {93.87 \sd{0.65}}  & {84.10 \sd{0.44}} & {42.20 \sd{1.44}} & \cellcolor{blue!15}{{40.64 \sd{1.01}}} & \cellcolor{blue!15}{{44.54 \sd{1.13}}} & \cellcolor{blue!15}{{89.79 \sd{4.54}}} & \cellcolor{blue!15}{{94.00 \sd{2.89}}} & {91.97 \sd{3.40}} \\
\midrule
{GAT} & {84.12 \sd{0.55}} & {75.46 \sd{1.72}} & {87.24 \sd{0.55}} & { 90.81 \sd{0.22}}  & {80.61 \sd{1.21}} & {26.28 \sd{1.73}} & {35.62 \sd{2.06}} & { 39.21 \sd{3.08}}\ & {53.64 \sd{11.1}} & {60.00 \sd{11.0}} & {61.21 \sd{8.17}} \\
{AD-GAT} & {85.02 \sd{1.64}} & \cellcolor{blue!15}{{79.92 \sd{0.76}}} & {87.38 \sd{0.33}} & \cellcolor{blue!15}{{94.03 \sd{0.34}}}  & {83.94 \sd{0.40}} & {41.25 \sd{0.77}} & {36.73 \sd{0.83}} & \cellcolor{blue!15}{{40.52 \sd{1.55}}} & {86.17 \sd{4.69}} & {91.50 \sd{2.42}} & {90.49 \sd{5.72}} \\
{AD-GAT$^{\#}$} & \cellcolor{blue!15}{{85.25 \sd{1.57}}} & {78.83 \sd{1.69}} & \cellcolor{blue!15}{{88.61 \sd{0.51}}} & {92.43 \sd{1.16}}  & \cellcolor{blue!15}{{84.56 \sd{1.25}}} & \cellcolor{blue!15}{{42.23 \sd{1.81}}} & \cellcolor{blue!15}{{39.18 \sd{1.19}}} & {38.92 \sd{2.23}} & \cellcolor{blue!15}{{90.85 \sd{2.70}}} & \cellcolor{blue!15}{{94.75 \sd{2.67}}} & \cellcolor{blue!15}{{92.95 \sd{3.52}}} \\
\midrule
{GraphSAGE} & {86.52 \sd{1.32}} & {76.04 \sd{1.30}} & { 88.45 \sd{0.50}} & {94.23 \sd{0.62}}  & \cellcolor{blue!15}{{86.16 \sd{0.50}}} & {34.23 \sd{0.99}} & {36.09 \sd{1.99}} & {37.77 \sd{4.14}}\ & {75.95 \sd{5.01}} & { 81.18 \sd{5.56}} & {82.43 \sd{6.14}} \\
{AD-GraphSAGE} & \cellcolor{blue!15}{{87.14 \sd{1.56}}} & \cellcolor{blue!15}{{79.84 \sd{1.26}}} & {88.99 \sd{0.64}} & {94.61 \sd{0.35}} & {85.00 \sd{1.60}} & {40.92 \sd{1.46}} & \cellcolor{blue!15}{{40.88 \sd{0.87}}} & \cellcolor{blue!15}{{40.10 \sd{1.45}}} & {89.57 \sd{4.30}} & {94.62 \sd{2.56}} & {92.95 \sd{2.84}} \\
{AD-GraphSAGE$^{\#}$} & {86.73 \sd{1.90}} & {78.31 \sd{1.53}} & \cellcolor{blue!15}{{89.62 \sd{0.55}}} & \cellcolor{blue!15}{{94.74 \sd{0.66}}}  & {84.62 \sd{1.33}} & \cellcolor{blue!15}{{41.47 \sd{1.33}}} & {39.80 \sd{0.65}} & { 39.28 \sd{1.17}} & \cellcolor{blue!15}{{89.57 \sd{3.74}}} & \cellcolor{blue!15}{{94.75 \sd{2.08}}} & \cellcolor{blue!15}{{93.44 \sd{2.20}}} \\
\bottomrule
\end{tabular}}
\caption{
Node classification performance of the AD-GNN variant with a modified depth benefit metric. The best results are highlighted.}
\label{Tab:modified_metric_comparison}
\end{table*}

\section{Supplementary Experimental Details}

\subsection{Benchmark Dataset Statistics}

Dataset statistics, including homophily ratios, graph sizes, feature dimensions, and number of labels, are depicted in Table \ref{tab:dataset_stats}.

\subsection{Model Hyper-Parameters}

We search model hyper-parameters in following ranges: number of layers $ \in \{2, 3, 4\}$, dropout $ \in \{0.1, 0.5, 0.6\}$, learning rate $\in \{0.001, 0.005, 0.01\}$, hidden dimension $\in \{128, 256\}$, weight decay $\in \{1 \times 10^{-4}, 5 \times 10^{-4}, 1 \times 10^{-1}\}$, epochs $ \in \{200, 500, 1000\}$, and $\lambda \in \{0.0, 0.1, 0.8, 0.9\}$. We employ the Adam algorithm \cite{kingma2014adam} for optimisation. 

\subsection{Computational Resources, and Implementation Details}

All experiments were conducted on a Linux server equipped with an Intel Xeon W-2175 2.50GHz processor featuring 28 cores, an NVIDIA RTX A6000 GPU, and 512GB of RAM. Our implementation employs pytorch version 2.3.1, torchvision 0.18.1, torchaudio 2.3.1, torch-geometric 2.7.0, torch-cluster 1.6.3, torch-scatter 2.0.9, and torch\_sparse 0.6.18.

\subsection{Additional Experiments}

\paragraph{Comparison with Existing GNNs}

We compare AD-GNN with existing state-of-the-art GNNs. Our baselines include general-purpose GNNs, as well as GNNs designed for heterophily, namely, GCN \cite{kipf2017semi}, GAT \cite{velivckovic2018graph}, GraphSage \cite{hamilton2017inductive}, GCN-Cheby \cite{defferrard2016convolutional}, MixHop \cite{abu2019mixhop}, SGC \cite{wu2019simplifying}, WRGAT \cite{suresh2021breaking}, ACM-GCN \cite{luan2021heterophily}, H2GCN \cite{zhu2020beyond}, GPR-GNN \cite{chienadaptive}, GGCN \cite{yan2022two}, LINKX \cite{lim2021large}, GloGNN \cite{li2022finding}, Dir-GNN \cite{rossi2024edge}, and Hi-GNN \cite{zheng2024learn}. The results are depicted in Table \ref{tab:baseline_comparison}. As shown in the results, AD-GNN variants significantly outperform existing state-of-the-art GNNs in both homophilic and heterophilic graphs.

\paragraph{Node Classification Performance with Modified Depth Benefit Metric}

We evaluate AD-GNN with the modified depth benefit metric introduced in the extended theoretical analysis. The modified AD-GNN variant is named "AD-GNN$^{\#}$". The results are depicted in Table \ref{Tab:modified_metric_comparison}. The performance of AD-GNN$^{\#}$ is closely aligned with that of AD-GNN. We believe this is due to the fact that signal calibration and noise evolution factors have less impact on shallow layers, where label-conditioned features remain nearly independent across layers, consistent with Theorem \ref{thm:multilayeraggr}. Shallow layers are known to be more beneficial for node classification compared to deeper layers.

\begin{figure*}[!t]
    \centering
    \begin{subfigure}[b]{0.70\textwidth}
        \centering
        \includegraphics[width=\textwidth]{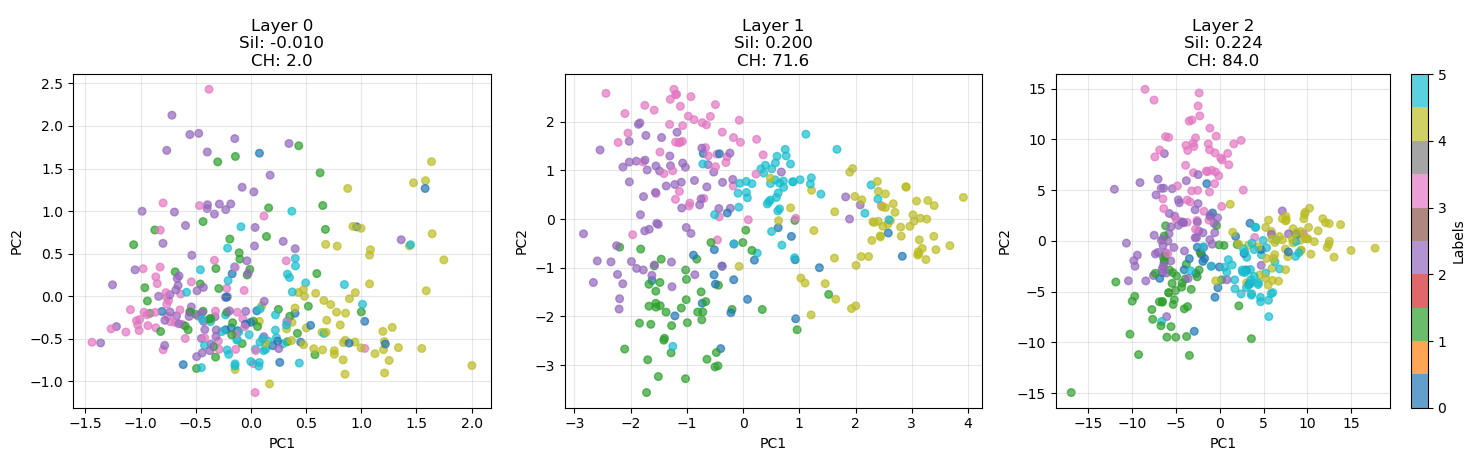}
        \caption{GCN}
    \end{subfigure}
    \hfill
    \begin{subfigure}[b]{0.70\textwidth}
        \centering
        \includegraphics[width=\textwidth]{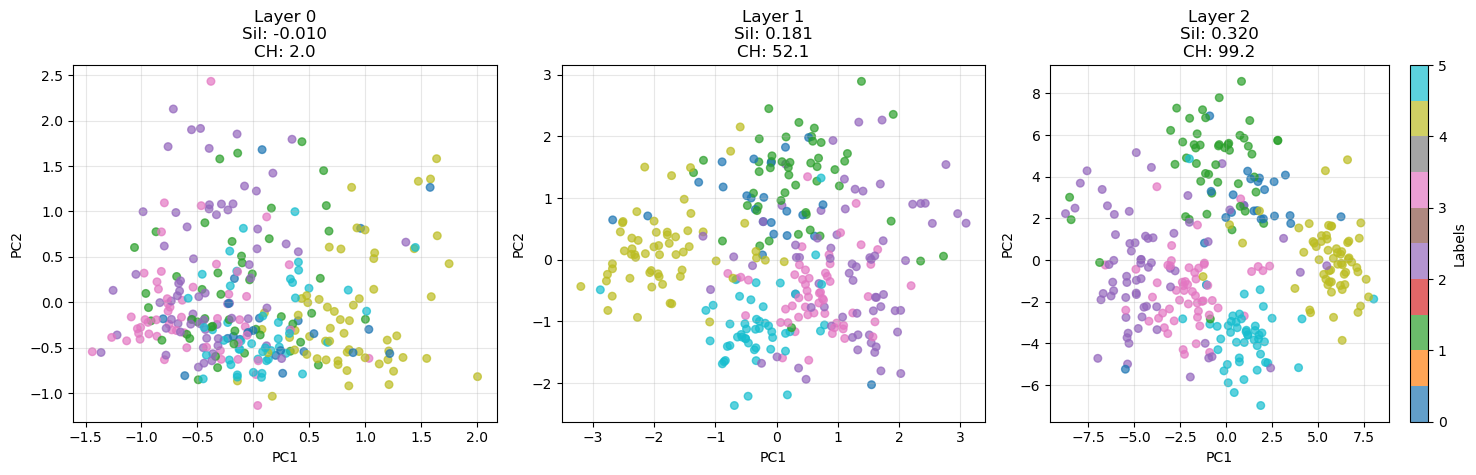}
        \caption{GAT}
    \end{subfigure}
    \begin{subfigure}[b]{0.70\textwidth}
        \centering
        \includegraphics[width=\textwidth]{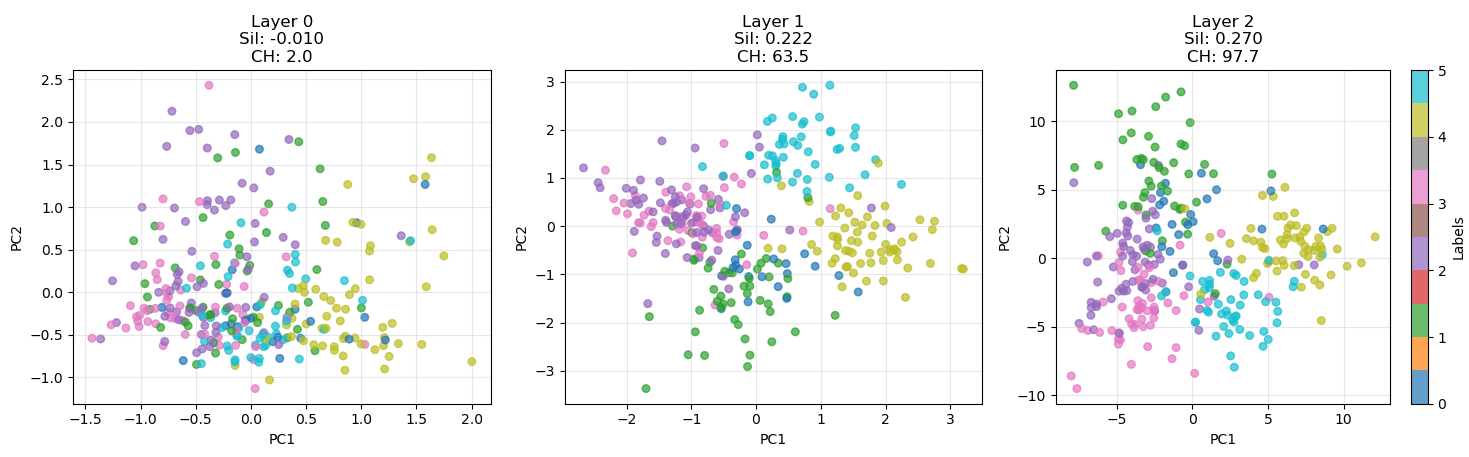}
        \caption{AD-GCN}
    \end{subfigure}
    \hfill
    \begin{subfigure}[b]{0.70\textwidth}
        \centering
        \includegraphics[width=\textwidth]{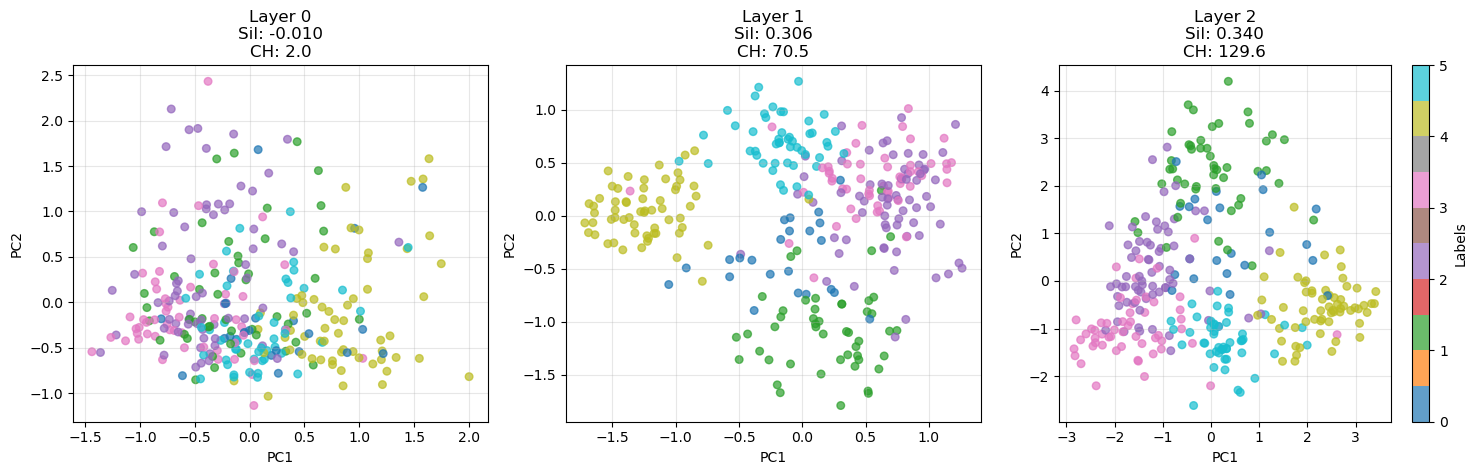}
        \caption{AD-GAT}
    \end{subfigure}
    \caption{Layer-wise Embedding Visualization for Citeseer Dataset. Sil and CH acronyms refer to the silhouette score and the Calinski-Harabasz score, respectively.}
    \label{fig:visualization}
\end{figure*}

\begin{table*}[t]
\centering
\renewcommand\arraystretch{1.1}
\scalebox{0.925}{\begin{tabular}{c| c c | c c}
\specialrule{.1em}{.05em}{.05em} 
Methods & Citeseer & Pubmed & Chameleon & Wisconsin \\ 
\toprule
{Common Neighbors Ratio} & {78.44 \sd{1.00}} & {87.93 \sd{0.36}} & {43.20 \sd{1.10}} & {88.75 \sd{2.18}} \\
{Jaccard Similarity} & {78.49 \sd{1.47}} & {88.34 \sd{0.36}} & {43.33 \sd{1.53}} & {91.60 \sd{2.18}} \\
{Adamic-Adar Index} & {78.85 \sd{1.01}} & {88.34 \sd{0.36}} & {43.30 \sd{1.14}} & {90.00 \sd{2.36}} \\
{Centrality Similarity (Betweenness)} & {78.72 \sd{1.49}} & {88.09 \sd{0.37}} & \cellcolor{blue!15}{43.51 \sd{1.29}} & {88.62 \sd{4.52}} \\
{K-shell Similarity} & {78.61 \sd{1.57}} & {88.38 \sd{0.35}} & {41.55 \sd{1.58}} & {89.50 \sd{5.89}} \\
{Clustering Coefficient Similarity} & {78.62 \sd{1.52}} & {88.39 \sd{0.51}} & {42.78 \sd{1.63}} & {89.88 \sd{3.60}} \\
\midrule
{Degree Assortativity (Ours)} & \cellcolor{blue!15}{79.13 \sd{0.99}} & \cellcolor{blue!15}{88.41 \sd{0.37}} & {43.40 \sd{0.56}} & \cellcolor{blue!15}{94.38 \sd{2.86}} \\
\specialrule{.1em}{.05em}{.05em}
\end{tabular}}
\caption{Comparison with different heuristics. The best results are highlighted.}
\label{tab:heuristics_perofrmance}
\end{table*}

\paragraph{Visualization Analysis} 

We provide two visualisation experiments related to AD-GNN. First, we analyse the embedding quality evolution of AD-GNN variants compared to the baselines in Figure \ref{fig:visualization}. We also present the Silhouette Score and Calinski-Harabasz Score \cite{wang2019improved} as metrics for cluster quality. Higher values of these metrics indicate better embedding clustering, resulting in improved classification. Based on the embedding visualisations, AD-GNN variants demonstrate superior performance compared to the baselines. AD-GNN achieves significantly better final layer separation, with a higher Silhouette score and a substantially higher Calinski-Harabasz score, compared to baselines. This indicates that AD-GNN's adaptive depth mechanism enables more effective feature learning and information propagation, resulting in better class boundaries in the final embedding space.

In the second experiment, we analyse the average depth benefit metric computed by AD-GCN across different node degrees in various datasets. The corresponding visualisation is shown in Figure \ref{fig:depth_benefit_metric}. As observed, the depth benefit metric tends to increase with node degree across all datasets. This trend is primarily due to the decrease in noise variance as node degree increases. Also, heterophilic graphs exhibit lower depth benefit metrics, as nodes in these datasets tend to have mixed label neighbourhoods, compared to homophilic ones. Additionally, in heterophilic datasets, where most nodes have low degrees (such as Cornell and Texas), the depth benefit metric values are typically much lower. This is because avoiding aggregation for nodes with low degrees is beneficial under strong heterophily, as demonstrated in Corollary \ref{cor:strong_heterophily}.

\paragraph{Comparison with Different Heuristics for AD-GNN$_\text{fast}$}

We empirically justify the design choice of degree assortativity for AD-GNN$_\text{fast}$ by comparing it with alternative heuristics. The similarity baselines we employ include common neighbors ratio, Jaccard similarity, Adamic–Adar index, betweenness centrality, k-shell, and clustering coefficient \cite{ge2021link,singh2023analysis}. Higher similarity metric values indicate more substantial label similarity. The performance and runtime comparisons are depicted in Table \ref{tab:heuristics_perofrmance}, and Figure \ref{fig:heurtistics_runtime}, respectively.

The results demonstrate that degree assortativity yields better classification performance and lower runtime compared to other heuristics. Given that degree assortativity requires only global degree statistics rather than expensive neighbourhood computations, it is computationally efficient compared to pairwise similarity methods. Moreover, unlike local heuristics such as common neighbours or the Adamic-Adar index, which are sensitive to variations in local structure, degree assortativity captures higher-order global patterns. This makes it more robust against network sparsity, heterophily, and irregularities in local structures.

\begin{figure*}[t] %
    \centering
\includegraphics[width=0.8\textwidth]{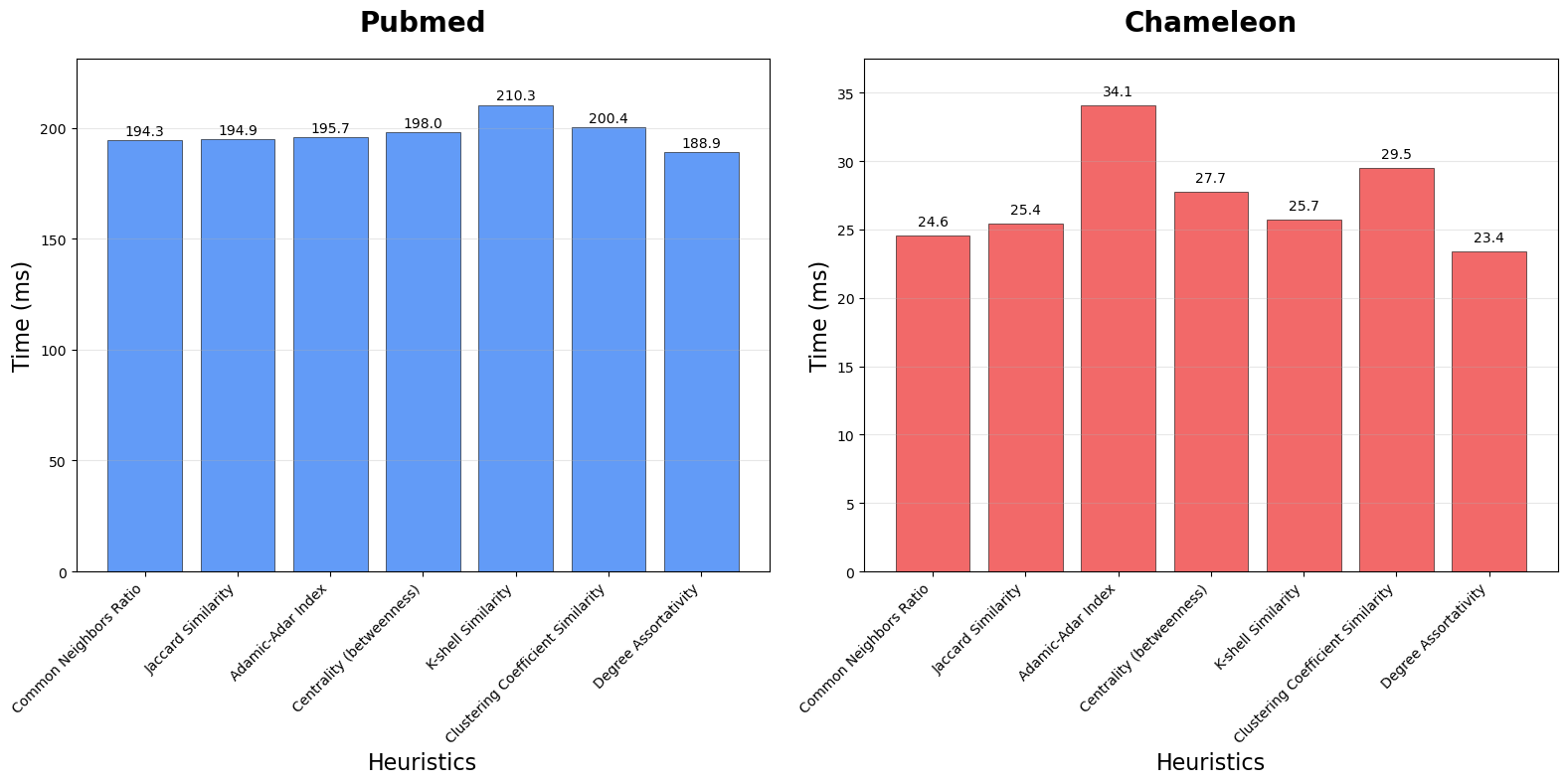}
    \caption{Runtime (per epoch) comparison between different heuristics.}
    \label{fig:heurtistics_runtime}
\end{figure*}

\begin{figure*}[!t]
    \centering
    
    \begin{subfigure}[t]{0.24\textwidth}
        \centering
        \includegraphics[width=\textwidth]{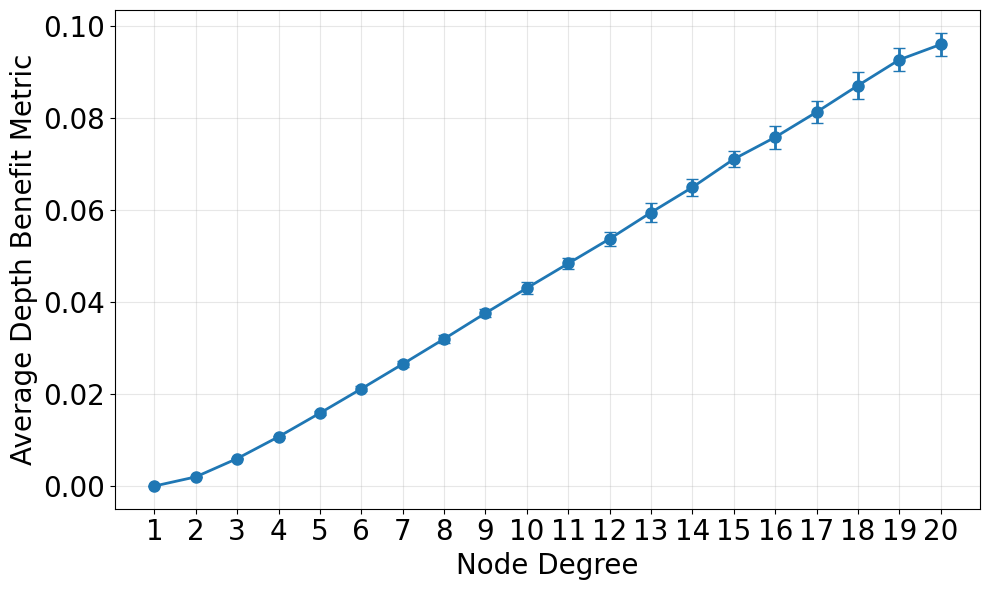}
        \caption{Cora ML}
    \end{subfigure}
    \hfill
    \begin{subfigure}[t]{0.24\textwidth}
        \centering
        \includegraphics[width=\textwidth]{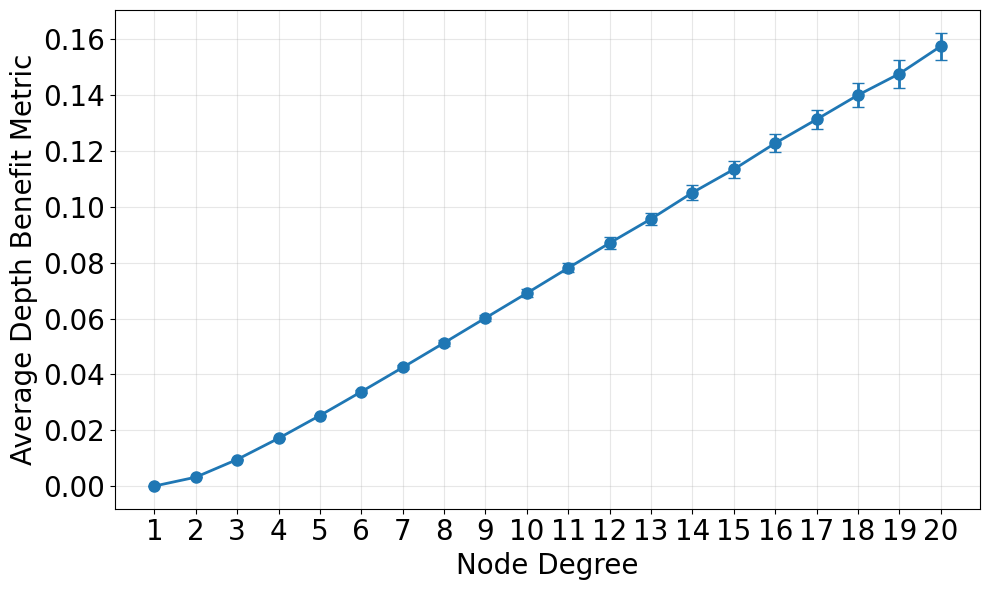}
        \caption{Pubmed}
    \end{subfigure}
    \hfill
    \begin{subfigure}[t]{0.24\textwidth}
        \centering
        \includegraphics[width=\textwidth]{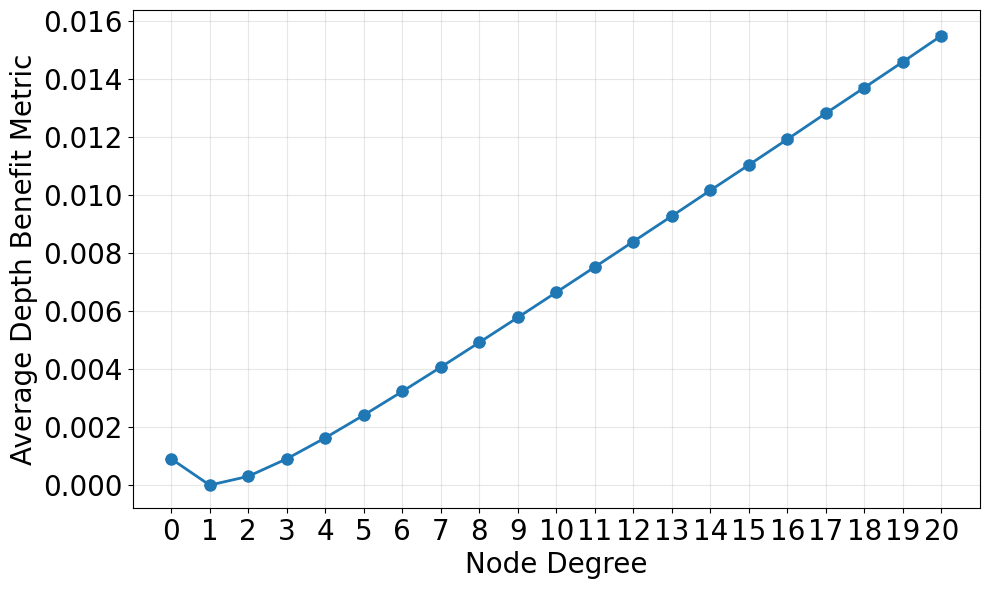}
        \caption{Photo}
    \end{subfigure}
    \hfill
    \begin{subfigure}[t]{0.24\textwidth}
        \centering
        \includegraphics[width=\textwidth]{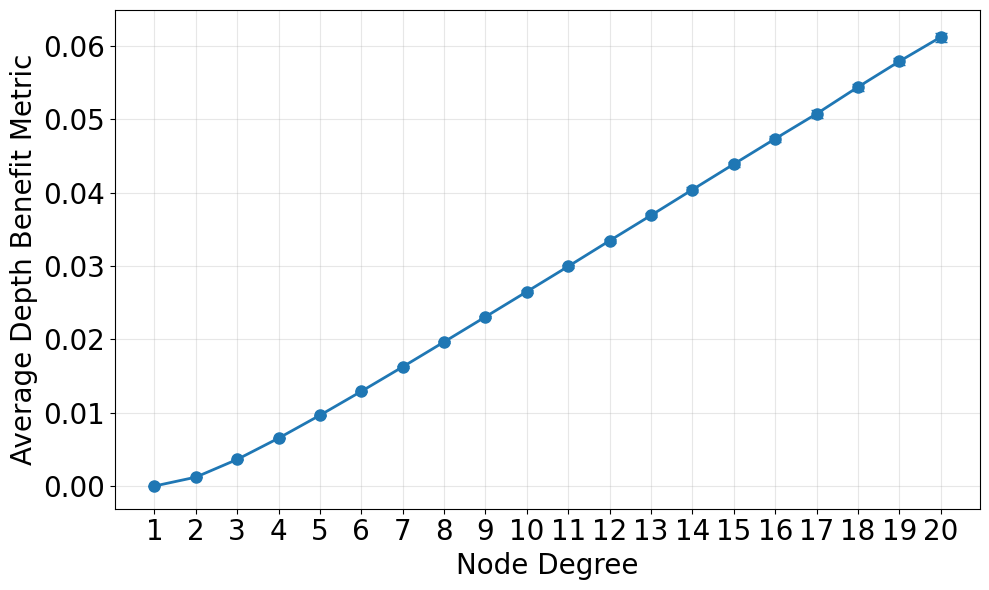}
        \caption{DBLP}
    \end{subfigure}
    
    \vspace{0.5cm}
    
    \begin{subfigure}[t]{0.24\textwidth}
        \centering
        \includegraphics[width=\textwidth]{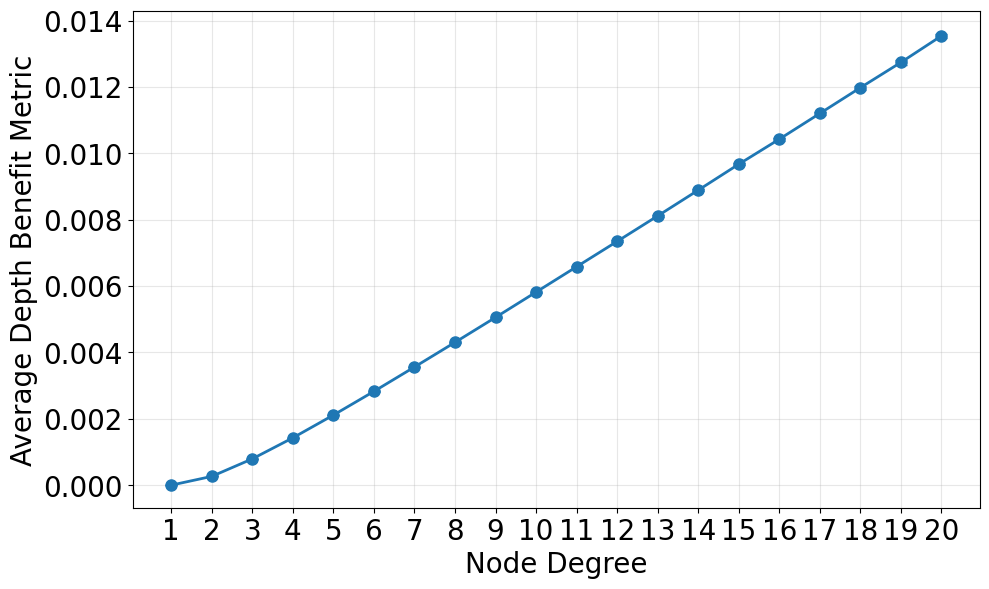}
        \caption{Squirrel}
    \end{subfigure}
    \hfill
    \begin{subfigure}[t]{0.24\textwidth}
        \centering
        \includegraphics[width=\textwidth]{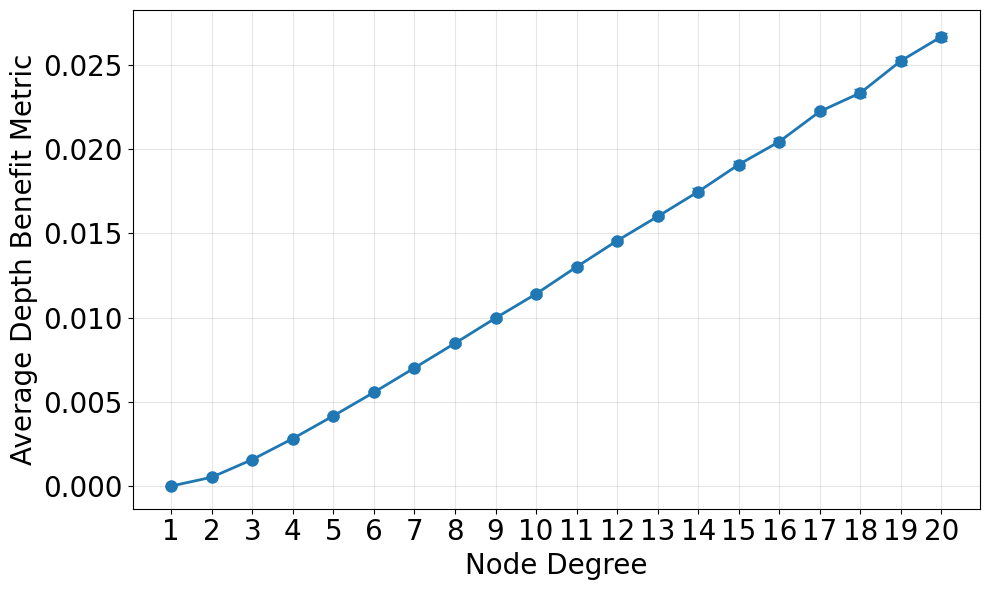}
        \caption{Chameleon}
    \end{subfigure}
    \hfill
    \begin{subfigure}[t]{0.24\textwidth}
        \centering
        \includegraphics[width=\textwidth]{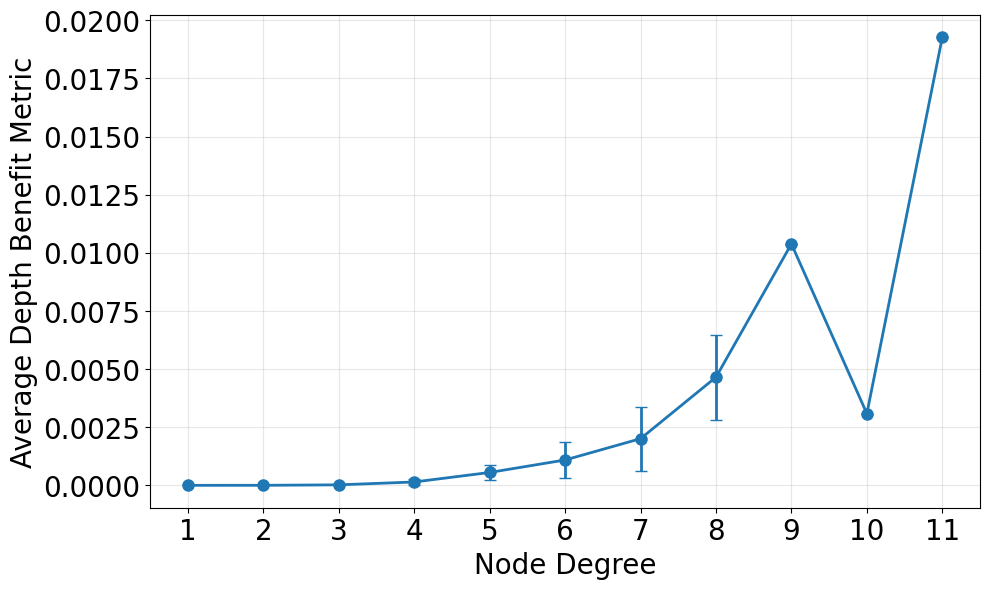}
        \caption{Cornell}
    \end{subfigure}
    \hfill
    \begin{subfigure}[t]{0.24\textwidth}
        \centering
        \includegraphics[width=\textwidth]{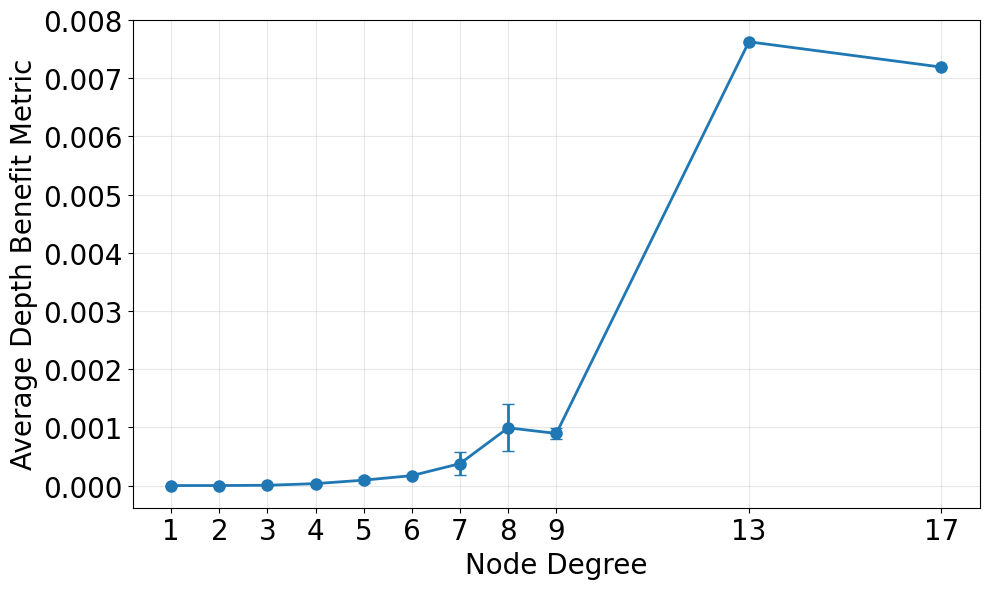}
        \caption{Texas}
    \end{subfigure}
    
    \caption{Average depth benefit metric computed per degree in homophilic and heterophilic datasets.}
    \label{fig:depth_benefit_metric}
\end{figure*}

    
    
    
    

\end{document}